\newtheorem*{assumption*}{\assumptionnumber}
\newcommand{\alg}{{NGAME}\xspace}
\newcommand{\algtitle}{\alg: Negative Mining-aware Mini-batching for Extreme Classification}
\newcommand{\suppl}{appendices\xspace}
\newcommand{\gpu}{A100\xspace}
\newcommand{\bert}{BERT\xspace}
\newcommand{\ova}{1-vs-all\xspace}
\renewcommand{\norm}[1]{\left\lVert#1\right\rVert}
\definecolor{arsenic}{rgb}{0.23, 0.27, 0.29}
\definecolor{silver}{rgb}{0.75, 0.75, 0.75}
\newcommand\blfootnote[1]{%
  \begingroup
  \renewcommand\thefootnote{}\footnote{#1}%
  \addtocounter{footnote}{-1}%
  \endgroup
}
\title{\algtitle}
\author{Kunal Dahiya\thanks{Authors contributed equally} \thanks{IIT Delhi} , Nilesh Gupta\footnotemark[1] \thanks{UT Austin} , Deepak Saini\footnotemark[1] \thanks{Microsoft Research} , Akshay Soni\thanks{Microsoft} , Yajun Wang\footnotemark[5] ,\\Kushal Dave\footnotemark[5] , Jian Jiao\footnotemark[5] , Gururaj K\footnotemark[5] , Prasenjit Dey\footnotemark[5] , Amit Singh\footnotemark[5] , \\Deepesh Hada\footnotemark[4] , Vidit Jain\footnotemark[4] , Bhawna Paliwal\footnotemark[4] , Anshul Mittal\footnotemark[2] , Sonu Mehta\footnotemark[4] , \\Ramachandran Ramjee\footnotemark[4] , Sumeet Agarwal\footnotemark[2] , Purushottam Kar\thanks{IIT Kanpur} , Manik Varma\footnotemark[4]}
\begin{document}
\maketitle

\blfootnote{Author Emails: {kunalsdahiya@gmail.com,  nileshgupta2797@utexas.edu, \{desaini, akson, yajunw, kudave, jiajia, gururajk, prdey, siamit, deepeshhada, t-viditjain, bhawna, Sonu.Mehta, ramjee, manik\}@microsoft.com, me@anshulmittal.org, sumeet@ee.iitd.ac.in, purushot@cse.iitk.ac.in}}

\begin{abstract}
Extreme Classification (XC) seeks to tag data points with the most relevant subset of labels from an extremely large label set. Performing \emph{deep XC} with dense, learnt representations for data points and labels has attracted much attention due to its superiority over earlier XC methods that used sparse, hand-crafted features. Negative mining techniques have emerged as a critical component of all deep XC methods that allow them to scale to millions of labels. However, despite recent advances, training deep XC models with large encoder architectures such as transformers remains challenging. This paper identifies that memory overheads of popular negative mining techniques often force mini-batch sizes to remain small and slow training down. In response, this paper introduces \alg, a light-weight mini-batch creation technique that offers provably accurate in-batch negative samples. This allows training with larger mini-batches offering significantly faster convergence and higher accuracies than existing negative sampling techniques. \alg was found to be up to 16\% more accurate than state-of-the-art methods on a wide array of benchmark datasets for extreme classification, as well as 3\% more accurate at retrieving search engine queries in response to a user webpage visit to show personalized ads. In live A/B tests on a popular search engine, \alg yielded up to 23\% gains in click-through-rates.
\end{abstract}

\section{Introduction}
\label{sec:intro}

\textbf{Overview}: Extreme Classification (XC) requires predicting the most relevant {\it subset} of labels for a data point from an extremely large set of labels. Note that multi-label classification generalizes multi-class classification which instead aims to predict a single label from a mutually exclusive label set. In recent years, XC has emerged as a workhorse for several real-world applications such as product recommendation~\cite{Medini19,Dahiya21,Mittal21}, document tagging~\cite{Babbar17,You18,Chang20}, search \& advertisement~\cite{Prabhu18b,Dahiya21,Jain16}, and query recommendation~\cite{Jain19,Chang20}.

\textbf{Challenges in XC}: XC tasks give rise to peculiar challenges arising from application-specific demands as well as data characteristics. For instance, to adequately serve real-time applications such as ranking and recommendation, XC routines must offer millisecond-time inference, even if selecting among several millions of labels. Training models at extreme scales is similarly challenging due to the infeasibility of training on all data point-label pairs when the number of data points and labels both are in the millions. This necessitates the use of some form of \textit{negative mining} technique wherein a data point is trained only with respect to its positive labels (of which there are usually a few) and a small set of carefully chosen negative labels. Training is made even more challenging due to the abundance of \textit{rare} or \textit{tail} labels for which few, often less than 10, training points are available. It is common for a majority of labels to be rare in ranking, recommendation, and document tagging tasks.

\textbf{Deep Siamese XC}: The recent years have have seen XC models become more and more accurate with the development of two specific design choices. Firstly, XC methods identified the benefits of using label metadata in various forms such as textual descriptions of labels \cite{Chang20,Mittal21,Dahiya21b} or label correlation graphs \cite{Mittal21b,Saini21}. This is in sharp contrast to earlier work in XC that treated labels as featureless identifiers (please see Section~\ref{sec:related} for a survey). Secondly, XC methods began reaping the benefits of deep-learnt embeddings by using deep encoder architectures to embed both data points and labels. This was yet another departure from traditional XC methods that relied on sparse, hand-crafted features such as bag-of-words. In particular, recent work has demonstrated advantages of using \textit{Siamese} architectures wherein representations for both data points as well as labels are obtained using a shared embedding model \cite{Dahiya21b,Lu20,Mittal21}. Some of these techniques \cite{Dahiya21b,Mittal21} demonstrate further gains in accuracy by fine-tuning the label representation offered by the shared embedding architecture to yield label classifiers. Other methods such as \cite{Zhang21} focus on training large transformer models at extreme scales. Together, these methods constitute the state-of-the-art across a wide range of retrieval and recommendation tasks.

\textbf{\alg and Our Contributions}: Despite these advances, training deep XC models based on transformer encoders poses steep time and memory overheads (see Sec.~\ref{sec:method} for a detailed discussion). In particular, popular negative mining techniques often themselves pose memory overheads when used to train large encoder architectures like transformers. This forces mini-batch sizes to remain small leading to slow convergence and sub-optimal accuracies. As a remedy, existing methods either settle for simpler encoders such as bag-of-embeddings~\cite{Dahiya21b} or else forego the use of label text altogether in an attempt to speed up training ~\cite{Zhang21,Chalkidis19,Jiang21}, both of which result in sub-optimal accuracies. This paper develops the \alg method for training large transformer-based Siamese architectures on XC tasks at the scale of 85 million labels. Training is performed in two stages: a pre-training stage first learns an effective Siamese architecture, followed by a fine-tuning stage that freezes the Siamese embeddings and learns a per-label refinement vector to fine-tune the label embeddings to obtain the final classifiers. To summarize, this paper makes the following contributions:
\begin{enumerate}[wide, labelwidth=!, labelindent=0pt]
    \item It notices that existing techniques treat mini-batching and negative mining as unrelated problems and this leads to inefficient training. In response, the \alg method proposes a light-weight negative mining-aware technique for creating mini-batches. A curious property of the mini-batches so obtained is that in-batch sampling itself starts offering informative negative samples and faster convergence than ANNS-based negative mining techniques such as ANCE (see Section~\ref{sec:related}). This leads to much more efficient training as in-batch sampling has much smaller time and memory overheads than ANNS-based methods.
    \item The above allows \alg to operate with much larger mini-batch sizes and allows training with 85 million labels on multiple GPUs without the need of a specialized algorithm to reduce inter-GPU communication. This becomes particularly critical when training large transformer-based feature architectures such as \bert which already restrict mini-batch sizes owing to their large memory footprint.
    \item Theoretical analysis is performed where \alg is shown to offer provably-accurate negative samples with bounded error under fairly general conditions. Furthermore, under some standard, simplifying assumptions, convergence to a first-order stationary point is also assured.
    \item \alg is shown to offer 16\% more accurate predictions than state-of-the-art methods including \bert-based methods such as LightXML~\cite{Chang19}, XR-Transformers~\cite{Zhang21} on a wide array of benchmark datasets. Moreover, in live A/B tests on a popular search engine, \alg yielded up to 23\% gains in click-through-rates over an ensemble of state-of-the-art generative, XC, information retrieval (IR) and two-tower models.
\end{enumerate}
\section{Related Work}
\label{sec:related}

\textbf{Extreme Classification (XC)}: Early extreme classification algorithms focused primarily on designing accurate and scalable classifiers for either sparse bag-of-words~\cite{Agrawal13,Weston13,Mineiro15,Babbar17,Babbar19,Bhatia15,Jasinska16,Khandagale19,Jain16,Prabhu14,Prabhu18,Prabhu18b,Tagami17,Yen16,Yen17,Prabhu20,Xu16,Wei19,Siblini18a,Niculescu17,Panda19,Cisse13,Barezi19} or dense pre-trained features obtained from feature extraction models such as CDSSM~\cite{Huang13,Jain19} or fastText~\cite{Joulin17}. More recent work demonstrated that learning task-specific features can lead to significant gains in classification accuracy. The use of diverse feature extraction architectures including Bag-of-embeddings~\cite{Dahiya21,Guo19,Gupta19,Wydmuch18,Zhang18,Guo19,Medini19}, CNN~\cite{Liu17, Kharbanda21,Chen22}, LSTM~\cite{You18}, and \bert~\cite{Devlin19,Chang20,Jiang21,Chalkidis19,Ye20} led to the development of \textit{deep extreme classification} (deep XC) techniques that offered significant gains over techniques that either used Bag-of-Words or pre-trained features. However, these techniques did not make any use of label metadata and continued to treat labels as indices devoid of features. Still more recent work~\cite{Chang20,Dahiya21,Mittal21,Mittal21b,Saini21} introduced various techniques to exploit label metadata such as textual descriptions of labels or label correlation graphs. For instance, the X-Transformer~\cite{Chang20} and XR-Transformer~\cite{Zhang21} methods make use of label text to clusters labels in their \textit{Semantic Label Indexing} phase. However, these methods employ an ensemble making them expensive to scale to large datasets. Other techniques use label metadata to inform classifier learning more intimately via label text~\cite{Mittal21,Dahiya21b}, images~\cite{Mittal22}, or label correlation graphs~\cite{Mittal21b,Saini21}. In all cases, careful use of label metadata is shown to offer better classification accuracies, especially on data-scarce tail labels.

\textbf{Siamese methods for XC}: Although widely studied in areas such as information retrieval \cite{Xiong20} and computer vision \cite{He20}, Siamese methods received attention in XC more recently due to their ability to incorporate label metadata in diverse forms. The key goal in Siamese classification is to learn embeddings of labels and data points such that data points and their positive labels are embedded in close proximity whereas data points and their negative labels maintain a discernible separation. DECAF~\cite{Mittal21} and ECLARE~\cite{Mittal21b} use asymmetric networks to embed data points and labels whereas SiameseXML~\cite{Dahiya21b} and GalaXC~\cite{Saini21} use a symmetric network. It is notable that with respect to the type of metadata used, DECAF and SiameseXML use only label text whereas ECLARE and GalaXC use label graphs. DECAF and ECLARE struggle to scale to tasks with over a million labels due to an expensive shortlisting step that seems critical for good performance. On the other hand, GalaXC requires pre-learnt embeddings for data points and labels from models such as Astec~\cite{Dahiya21} that were trained on the same task. This combined with the execution of a multi-layer graph convolutional network makes the method more expensive. Of special interest to this paper is the SiameseXML technique that yields state-of-the-art accuracies on short-text datasets and can scale to 100M labels. However, to achieve such scales SiameseXML restricts itself to a low-capacity Bag-of-embeddings feature architecture and uses expensive but suboptimal negative sampling techniques discussed below.

\textbf{Negative Mining}: Training on a small but carefully selected set of irrelevant labels per training point is critical for accurately scaling XC algorithms. This is because training on all irrelevant labels for every data point becomes infeasible when the number of training points and labels are both in the millions. Negative mining techniques come in three flavours: 

1. Oblivious: these include random sampling wherein negative samples are either drawn uniformly or from token/unigram distributions~\cite{Mikolov13}, and in-batch sampling~\cite{Dahiya21b,Guo19,Faghri18,Chen20,He20} wherein negative samples are identified among positive samples of other data points within the same mini-batch. These are computationally inexpensive but can offer uninformative negatives and slow convergence~\cite{Xiong20} (please see Fig.~\ref{fig:convergence}).

2. Feature-aware: these techniques identify \textit{hard} negatives based on either sparse raw features such as BM25~\cite{Lee18,Luan20} or features from a pre-trained model like \bert~\cite{hofstatter21}. However, since these features are not necessarily aligned to the task, the negative samples may offer biased training.

3. Task-aware: methods such as ANCE~\cite{Xiong20} and SiameseXML~\cite{Dahiya21b} retrieve hard negatives using features learnt for the task at hand. Since these features keep getting updated during training, an Approximate Nearest Neighbor Search (ANNS) index has to be recomputed every few epochs. Similarly, RocketQA~\cite{Qu21} trains an expensive cross-encoder model to eliminate false negatives. Although these methods offer better negatives, they also add substantial time and memory overhead {\it e.g.,} ANCE~\cite{Xiong20} recomputes an ANNS index over label embeddings every few training epochs which can be expensive even with asynchronous training. Moreover, these strategies are also more memory intensive as discussed in Section~\ref{sec:method}.

In contrast, \alg provides a strategy for mining provably-accurate negatives similar to task-aware methods (see Theorem~\ref{thm:main}) but with overheads comparable to those of oblivious techniques. On multiple datasets, the overhead of executing \alg's negative mining technique was up to 1\%  as compared to 210\% for ANCE (see Tab.~\ref{tab:ablation_speedup}) allowing \alg to train with powerful feature architectures such as \bert at extreme scales and offer accuracies superior to leading oblivious, feature- and task-aware negative mining techniques.

\section{\alg: \underline{N}e\underline{G}ative Mining-\underline{A}ware \underline{M}ini-batching for  \underline{E}xtreme Classification}
\label{sec:method}

\begin{figure*}[t]
    \centering
    \includegraphics[width=0.95\textwidth]{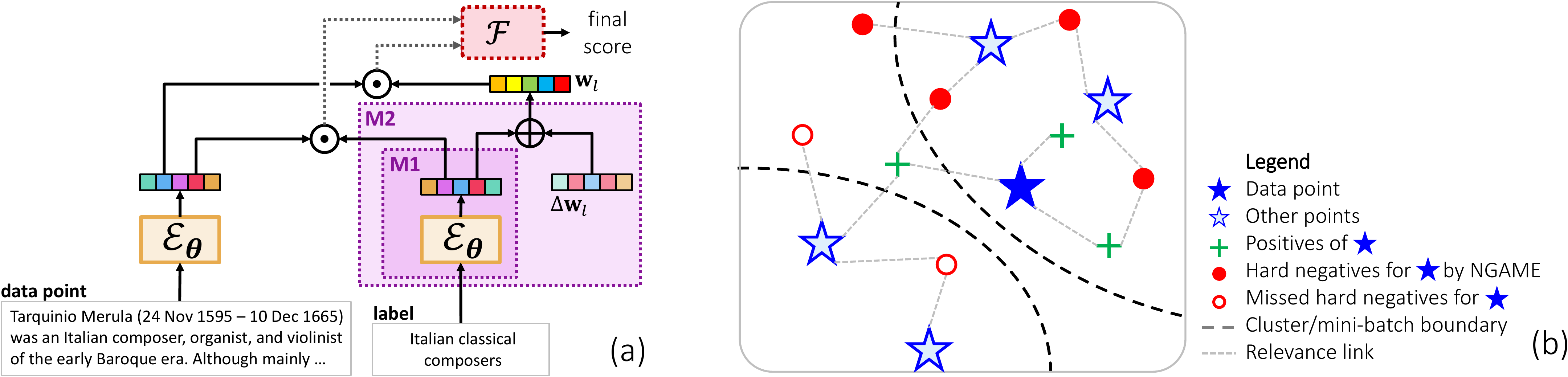}
    \caption{\small (Left) \alg's model architecture. Data points and labels are embedded using a Siamese encoder model and per-label \ova classifiers are learnt. Classifier and Siamese scores are combined to offer final output. (Right) A depiction of \alg's negative mining strategy. A light dotted line indicates a data point-relevant label pair. \alg misses only those hard negative labels for a data point that are not relevant to any other data point in its cluster. See also Figure~\ref{fig:negatives} for a real example illustrating the hard-negatives retrieved by \alg.}
    \label{fig:arch}
\end{figure*}

\textbf{Notation}: $L$ is the total number of labels (note that the same set of labels is used during training and testing). $\vx_i, \vz_l \in \cX$ denote textual representations of data point $i$ and label $l$ respectively. The training set $D:=\{\{\vx_i,\vy_i\}_{i=1}^{N},\{{\vz_l}\}_{l=1}^{L}\}$ consists of $L$ labels and $N$ data points. For a data point $i \in [N]$, $\vy_i \in\{-1,+1\}^L$ is its ground truth vector, {\it i.e.}, $y_{il} = +1$ if label $l$ is relevant/positive for the data point $i$ and $y_{il} = -1$ otherwise.

\textbf{Architecture}: \alg uses an encoder model (DistilBERT base~\cite{Sanh2019DistilBERTAD,Wolf20}) denoted by $\cE_{\vtheta}: \cX \rightarrow \cS^{D-1}$, with trainable parameters $\vtheta$, to embed both data point and label text onto the $D$-dimensional unit sphere $\cS^{D-1}$. \alg also uses \ova classifiers $\vW \deff \{\vw_l\}_{l \in [L]}$ where $\vw_l$ is the classifier for label $l$.

\textbf{Inference Pipeline}: After training $\cE_{\vtheta},\vW$, \alg establishes a maximum inner product search (MIPS)~\cite{MalkovY16} structure over the set of learnt label classifiers $\{\vw_l\}_{l \in [L]}$. Given a test data point $\vx_t$ its embedding $\cE_{\vtheta}(\vx_t)$ is used to invoke the MIPS structure that returns labels $l \in [L]$ with highest classifier scores $\vw_l^\top\cE_{\vtheta}(\vx_t)$. Mild gains were observed by fusing classifier scores with Siamese scores $\cE_{\vtheta}(\vz_l)^\top\cE_{\vtheta}(\vx_t)$ for the shortlisted labels using a simple fusion architecture $\cF$ and recommending labels in decreasing order of the fused scores (see Appendix~\ref{app:implementation} for details). Appendix~\ref{app:complexity} establishes that \alg offers $\bigO{b + D\log L}$ time inference.

\textbf{Joint Training and its Limitations}: The encoder parameters $\vtheta$ and \ova classifiers $\vW = \bc{\vw_l}_{i=1}^{L}$ could be trained by minimizing a triplet loss function
\begin{equation}
    \label{eq:loss}
    \min_{\vtheta, \vW} \cL(\vtheta, \vW) = 
 \sum_{i = 1}^N\sum_{\substack{l: y_{il} = +1\\k: y_{ik} = -1}} [\vw_k^\top\cE_{\vtheta}(\vx_i) - \vw_l^\top\cE_{\vtheta}(\vx_i) + \gamma]_+.
\end{equation}

In the above expression, the indices $l$ and $k$ run over the relevant and irrelevant labels for data point $i$ respectively with $\vw_l,\vw_k$ being their label-wise classifiers. Consequently, \eqref{eq:loss} encourages every relevant label to get a score at least margin $\gamma$ higher than every irrelevant label. Other task losses such as BCE or contrastive could also have been used.

For an encoder model with $b$ parameters, a training epoch w.r.t. $\cL(\vtheta, \vW)$ would take $\Om{NL(b+D)\log L} = \Om{NL}$ time (since data points usually have $\bigO{\log L}$ relevant labels -- see Tab.~\ref{tab:data_stats}) which is prohibitive when $N$ and $L$ are both in the millions. Joint training also requires storing both the encoder and classifier models in memory. Along with overheads of optimizers such as Adam, this forces mini-batches to be small and slows down convergence \cite{Qu21}. Common workarounds such as distributed training on a GPU cluster~\cite{Song20} impose overheads such as inter-GPU communication. \alg instead proposes a novel two-pronged solution based on modular training and negative mining-aware mini-batching.

\textbf{Modular Training}: Recent papers~\cite{Dahiya21b,Mittal21b} have postulated that label embeddings can serve as surrogates for label classifiers. To effect this intuition, \alg reparameterizes label classifiers as $\vw_l = \cE_{\vtheta}(\vz_l) + \veta_l$, where $\cE_{\vtheta}(\vz_l)$ is the label embedding and $\veta_l$ is a free $D$-dimensional \textit{residual} vector. In \textbf{module M1}, \alg sets the residual vectors to zero, {\it i.e.,} using $\vw_l = \cE_{\vtheta}(\vz_l)$. Plugging this into \eqref{eq:loss} results in a loss expression that depends only on $\vtheta$ and encourages label-data point embedding similarity $\cE_{\vtheta}(\vx_i)^{\top} \cE_{\vtheta}(\vz_l)$ to be high when $y_{il} =+1$ and low otherwise. Note that this is identical to Siamese training for encoder models~\cite{Dahiya21b,Guo19,Faghri18,Chen20,He20,Xiong20,hofstatter21} and allows \alg to perform supervised training of the encoder model $\vtheta$ by itself in M1. In \textbf{module M2}, the learnt encoder model $\hat\vtheta$ is frozen and residual vectors $\veta_l$ are implicitly learnt by initializing $\vw_l = \cE_{\hat\vtheta}(\vz_l)$ and minimizing \eqref{eq:loss} with respect to $\vW$ alone. Appendix~\ref{app:implementation} gives details of M1 and M2 implementation.

\textbf{Benefits of Modular Training}: Modular training generalizes the Siamese encoder training strategy popular in literature and makes more effective use of label text during training that benefits rare labels with scant supervision. Splitting encoder and classifier training also eases memory overheads in each module -- the parameters and gradients of the encoder and classifier are not required to be stored (in the GPU memory) or computed at the same time. Please refer to DeepXML~\cite{Dahiya21} that discusses the benefits of modular training in great depth. Combined with memory savings offered by \alg's negative sampling technique (discussed below), this enables \alg to train with 1.5$\times$ larger mini-batches compared to approaches such as ANCE~\cite{Xiong20} thus offering faster convergence. For instance, \alg could be trained within 50 hours on 6$\times$\gpu GPUs on the PR-85M XC task with more than 85 million labels.

\textbf{Negative Mining and its Overheads}: Modular training partly lowers the memory costs of training but not its $\Om{NL}$ computational cost. To reduce computational costs to $\bigO{N\log L}$, \textit{negative mining} techniques are critical. These restrict training to only the positive/relevant labels for a data point (which are usually only $\bigO{\log L}$ many -- see Tab.~\ref{tab:data_stats}) and a small set of the hardest-to-classify negative/irrelevant labels for that data point. However, such techniques end up adding their own memory and computational overheads. For example, ANCE~\cite{Xiong20} queries an ANNS structure over label embeddings that needs to be periodically refreshed (since label embeddings get trained in parallel) while RocketQA~\cite{Qu21} trains a cross-encoder to weed out false negatives; these add significant overhead to training. Computing label embeddings $\cE_{\theta}(\vz_l)$ for retrieved hard negatives is an additional overhead faced by all these methods. For a mini-batch of, say $S$ training data points, if a single positive and $H$ hard-negatives are chosen per data point, then a total of $S(H + 2)$ embeddings need to be computed. For large transformer-based encoder models, this forces batch sizes $S$ to be very small. A notable exception is in-batch negative mining~\cite{Dahiya21b,Guo19,Faghri18,Chen20,He20,Karpukhin20} that looks for hard negatives of a data point only among positives of other data points in the same mini-batch and requires computing only $2S$ embeddings instead of $2S + HS$ embeddings. However, this technique may offer uninformative negatives and slow convergence~\cite{Xiong20} if mini-batches are created randomly.

\renewcommand{\algorithmiccomment}[1]{{\hfill//\emph{#1}}}

\begin{algorithm}[t]
	\caption{\alg's negative mining strategy}
	\label{algo:iteration}
	{\small
	\begin{algorithmic}[1]
		\REQUIRE Init model $\vtheta^0$, mini-batch size $S$, cluster size $C$, refresh interval $\tau$, hardness threshold $r$%
		\FOR{$t = 0, \ldots, T$}
		\IF[Redo clustering at regular intervals]{$t \% \tau = 0$}
		    \STATE Cluster current data point embeddings $\cE_{\vtheta^t}(\vx_i)$ 
		    into $\ceil{N/C}$ clusters, with each cluster containing $\approx C$ data points each.
		\ENDIF
		\STATE Choose $\ceil{S/C}$ random clusters to create a mini-batch $S_t$ of size $S$
		\STATE Take positive labels  $\cP^i_+$ for each data point $i \in S_t$ and let $L_t \deff \bigcup_{i\in S_t}\cP^i_+$%
		\STATE Compute $\cE_{\vtheta^{t-1}}(\vx_i),\cE_{\vtheta^{t-1}}(\vz_l)$ for all $i \in S_t, l \in L_t$.
		\STATE Select hard-negatives for data point $i \in S_t$ as $\bc{l \in L_t: \norm{\cE_{\vtheta}(\vx_i) - \cE_{\vtheta}(\vz_l)}_2 \leq r}$ 
		\STATE Update $\vtheta^t$ using mini-batch SGD over $S_t$
		\ENDFOR
	\end{algorithmic}
	}
\end{algorithm}

\textbf{Negative Mining-aware Mini-batching}: \alg notes that the inexpensive in-batch sampling technique could have offered good-quality negatives had mini-batch creation encouraged hard-negatives of a data point to exist among positives of other data points in the mini-batch. Since Siamese training encourages embeddings of data points and related labels to be close {\it i.e.,} $\norm{\cE_{\vtheta}(\vx_i) - \cE_{\vtheta}(\vz_l)}_2 \ll 1$ if $y_{il} = 1$, if mini-batches are created out of data points that lie close to each other, triangle inequality ensures that in-batch negatives are of good quality. Figure~\ref{fig:arch} depicts this pictorially and Theorem~\ref{thm:main} assures this formally. The resulting mini-batching and negative mining strategy is presented in Algorithm~\ref{algo:iteration} and offers provably accurate hard-negatives at the low computational cost of in-batch negative mining. It also imposes very little compute overhead, requiring occasional data clustering that is inexpensive compared to ANNS creation or cross-encoder training. On a dataset with 85 million labels and 240 million training data points, refreshing ANCE's ANNS index and hard-negative retrieval for all training data points took 4 hours whereas \alg took under an hour. ANCE needed to compute embeddings for hard negatives separately causing its GPU memory requirement for label embeddings to be at least $2\times$ higher than \alg that only needed to compute embeddings for positive labels (since \alg's negatives are sampled from the positives of other data points in the mini-batch). Thus, \alg imposed a mere 1\% increase in epoch time over vanilla in-batch sampling whereas the same was up to 210\% for ANCE~(see Table~\ref{tab:ablation_speedup}). \alg also significantly outperformed techniques such as TAS~\cite{hofstatter21} that use task-agnostic negatives obtained by clustering static features (see Fig.~\ref{fig:convergence}). This is because previous works use static one-time clustering~\cite{Zhang21,hofstatter21} using features that are not aligned to the given task and offer poor quality negatives and biased training as a result. On the other hand, \alg continuously adapts its negatives to the progress made by the training algorithm, thus offering provably accurate negatives and convergent training (see Theorem~\ref{thm:main}). Moreover, \alg accomplishes this with much smaller overheads compared to existing state-of-the-art approaches (see Table~\ref{tab:ablation_speedup} in \suppl).

\textbf{Curriculum Learning with \alg}: The use of extremely hard negatives may impact training stability in initial epochs when the model is not well-trained~\cite{Harwood17}. Thus, it is desirable to initiate training with easier negatives. Fortunately, \alg can tweak the ``hardness'' of its negatives by simply changing the cluster size $C$ in Algorithm~\ref{algo:iteration}. Using $C = 1$ makes \alg identical to vanilla in-batch negative mining which offers easier negatives whereas the other extreme $C \rightarrow N$ causes \alg to behave identical to ANNS-based techniques such as ANCE~\cite{Xiong20} that offer the hardest-to-classify negatives. This allows \alg to commence training with small values of $C$ and gradually increase it to get progressively harder negatives in later epochs. This curriculum learning approach could lead to $25\%-40\%$ faster convergence when compared to training that used a fixed cluster size $C$.

\section{Theoretical Analysis}
\label{sec:analysis}
Theorem~\ref{thm:main} guarantees that negative samples offered by \alg are provably accurate. The key behind this result is the intuition that \alg's negative mining strategy should succeed if embeddings of data points and relevant labels lie in close proximity and clustering is sufficiently precise. To state this result, we define the notion of a good embedding and clustering.
\begin{definition}[$(r,\epsilon)$-good embedding]
\label{defn:good-embed}
An encoder model $\cE$ parameterized by $\vtheta$ is said to be $(r,\epsilon)$-good if
\[
\Pp{i \in [N], l: y_{il} = 1}{\norm{\cE_{\vtheta}(\vx_i) - \cE_{\vtheta}(\vz_l)}_2 > r} \leq \epsilon.
\]
\end{definition}
\begin{definition}[$(r,\epsilon)$-good clustering]
\label{defn:good-clust}
A clustering of a given a set of data point embedding vectors $\cE_{\vtheta}(\vx_i)$ into $K = \ceil{N/C}$ clusters $C_1, \ldots, C_K$ is said to be $(r,\epsilon)$-good if
\[
\Pp{i, j}{\norm{\cE_{\vtheta}(\vx_i) - \cE_{\vtheta}(\vx_j)}_2 \leq r, c(i) \neq c(j)} \leq \epsilon,
\]
where $c(i) \in [K]$ tells us the cluster to which data point $i$ as assigned.
\end{definition}
\begin{theorem}[Negative Mining Guarantee]
\label{thm:main}
Suppose Algorithm~\ref{algo:iteration} performs its clustering step using data point embeddings obtained using the encoder model $\cE_{\vtheta}$ parameterized by $\vtheta$ and identifies a set of hard negative labels $\hat\cP^i_-$ for data point $i$ in step 8 of the algorithm using the threshold $r > 0$. For any $i \in [N], l \in [L]$, let $E^r_{il} := \bc{y_{il} = -1} \wedge \bc{\norm{\cE_{\vtheta}(\vx_i) - \cE_{\vtheta}(\vz_l)}_2 \leq r} \wedge \bc{l \notin \hat\cP^i_-}$ be the \textit{bad} event where $l$ is an $r$-hard negative label for data point $i$ but \alg fails to retrieve it. Then if the model $\vtheta$ was $(r,\epsilon_1)$-good and the clustering was $(2r,\epsilon_2)$-good, we are assured that
\[
\frac1{NL}\sum_{i \in [N], l \in [L]}\bI\bs{E^r_{il}} \leq c_1\cdot\epsilon_1 + c_2\cdot\epsilon_2,
\]
for some constants $c_1,c_2$ that are entirely independent of the execution of the \alg algorithm and depend only on certain dataset statistics (such as number of data points per label etc).
\end{theorem}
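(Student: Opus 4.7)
The key observation is that NGAME builds each mini-batch as a union of entire clusters (line 5 of Algorithm~\ref{algo:iteration}), so if $E^r_{il}$ occurs then every $j \in \cP_l := \bc{j' : y_{j'l} = +1}$ must satisfy $c(j) \neq c(i)$---otherwise $l \in L_t$ and step 8 would include $l$ as a hard negative for $i$. I would exploit this by witnessing each bad event through some $j \in \cP_l$: either $j$'s own embedding lies far from that of $\vz_l$ (an \emph{embedding failure}) or $j$ sits close to $\vx_i$ in embedding space yet is assigned to a different cluster (a \emph{clustering failure}); the triangle inequality bridges these two modes.

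Concretely, I would first prove the per-witness indicator bound: for every $(i,l)$ and every $j \in \cP_l$,
\[
\bI\bs{E^r_{il}} \leq \bI\bs{\norm{\cE_{\vtheta}(\vx_j) - \cE_{\vtheta}(\vz_l)}_2 > r,\ y_{jl}=+1} + \bI\bs{\norm{\cE_{\vtheta}(\vx_i) - \cE_{\vtheta}(\vx_j)}_2 \leq 2r,\ c(i) \neq c(j)}.
\]
Indeed, if the first indicator is zero then $\norm{\cE_{\vtheta}(\vx_j) - \cE_{\vtheta}(\vz_l)}_2 \leq r$; combined with $\norm{\cE_{\vtheta}(\vx_i) - \cE_{\vtheta}(\vz_l)}_2 \leq r$ from $E^r_{il}$, the triangle inequality yields $\norm{\cE_{\vtheta}(\vx_i) - \cE_{\vtheta}(\vx_j)}_2 \leq 2r$, and $E^r_{il}$ forces $c(i)\neq c(j)$, so the second indicator equals one.

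I would then average this bound over $j \in \cP_l$ (omitting labels with $n_l = 0$, since such a label can neither be retrieved nor meaningfully called a hard negative) and sum over $(i,l)\in[N]\times[L]$. The first summand is independent of $i$, so summation in $i$ contributes a factor of $N$ and the first half reduces to $N\sum_l n_l^{-1} \sum_{j \in \cP_l} \bI\bs{\text{embed-fail}(j,l)}$; swapping the order of summation in the second half gives $\sum_{i,j} \bI\bs{\text{clust-fail}(i,j)} \cdot s_j$ with $s_j := \sum_{l:\, y_{jl}=+1} n_l^{-1} \leq n^+_j$ (the number of positives of data point $j$). Bounding the total embed-fail count by $\epsilon_1 P$ via Definition~\ref{defn:good-embed} (where $P := \sum_l n_l$) and the total clust-fail count by $\epsilon_2 N^2$ via Definition~\ref{defn:good-clust}, then dividing by $NL$, delivers the desired $c_1\epsilon_1 + c_2\epsilon_2$ bound with, e.g., $c_1 = P/(L\min_l n_l)$ and $c_2 = N(\max_j s_j)/L$---both purely dataset statistics independent of how \alg is executed.

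\textbf{Main obstacle.} The delicate part is the handling of data-scarce tail labels. The natural $c_1$ inherits a $1/\min_l n_l$ factor that degrades sharply whenever some label has only one or two positives (the norm in XC), and $c_2$ analogously blows up when a few data points are relevant to disproportionately many labels. Sharpening these constants---by, for instance, restricting attention to labels with $n_l \geq n_{\min}$ and absorbing the remaining tail into an additive slack, or by choosing the witness $j \in \cP_l$ adaptively so as to charge each bad event to whichever of the two failure modes is less frequent---is the main technical work beyond the clean triangle-inequality core of the argument.
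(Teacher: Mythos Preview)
Your proof is correct and follows the same route as the paper: the same per-witness indicator bound via the triangle inequality (the paper writes it as $E^r_{il} \Rightarrow F^{2r}_{ij} \vee G^r_{jl}$, with $F$ the clustering failure and $G$ the embedding failure), the same averaging over all $j$ with $y_{jl}=1$, and the same split into an embedding-failure sum and a clustering-failure sum. The only difference is in how the constants are extracted from the resulting weighted sums: where you use $\min_l n_l$ and $\max_j s_j$ bounds, the paper centers and applies Cauchy--Schwarz to bring in the second moments $\mathrm{Var}[(N-q_l)/q_l]$ and $\mathrm{Var}[p_i]$---this is precisely its answer to the tail-label concern you flag as the main obstacle---and both approaches recover $c_1, c_2 \leq 1$ in the balanced case of Corollary~\ref{cor:label-informal}.
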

A description of the constants $c_1,c_2$ as well as the complete proof are provided in Appendix~\ref{supp:theory}. However, to appreciate the essence of the guarantees, it helps to look at a special case of the result as indicated in the following corollary:
\begin{corollary}
\label{cor:label-informal}
For the special case when all data points have the same number of relevant labels and all labels are relevant to the same number of data points, we have $c_1, c_2 \leq 1$ which gives us
\[
\frac1{NL}\sum_{i \in [N], l \in [L]}\ind{E^r_{il}} \leq \epsilon_1 + \epsilon_2
\]
\end{corollary}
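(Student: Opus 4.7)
The plan is to trace through the proof of Theorem~\ref{thm:main} in the uniform setting and verify that both multiplicative constants collapse to $1$. The argument has three pieces: (i) unpack what $E^r_{il}$ says about the geometry, (ii) charge each bad event to a violation of one of the two goodness conditions via a triangle-inequality argument, and (iii) double-count those charges while tracking the per-data-point and per-label counts.

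First I would translate the bad event into a structural statement. Since Algorithm~\ref{algo:iteration} assembles each mini-batch $S_t$ as a union of full clusters, $i \in S_t$ automatically forces the entire cluster $c(i)$ to lie inside $S_t$. So if $E^r_{il}$ holds then $\norm{\cE_{\vtheta}(\vx_i)-\cE_{\vtheta}(\vz_l)}_2 \le r$ together with $l \notin \hat\cP^i_-$ forces $l \notin L_t = \bigcup_{j\in S_t}\cP^j_+$; in particular, \emph{no} $j$ with $c(j)=c(i)$ can have $y_{jl}=+1$. Now fix any $j$ with $y_{jl}=+1$, so $c(j)\ne c(i)$. Either $\norm{\cE_{\vtheta}(\vx_j)-\cE_{\vtheta}(\vz_l)}_2 > r$, a direct witness that $(r,\epsilon_1)$-goodness of the embedding fails at $(j,l)$; or the triangle inequality yields $\norm{\cE_{\vtheta}(\vx_i)-\cE_{\vtheta}(\vx_j)}_2 \le 2r$, and then $c(i)\ne c(j)$ says $(2r,\epsilon_2)$-goodness of the clustering fails at $(i,j)$.

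Next I would double-count. Let $A_1 := \{(j,l): y_{jl}=+1,\ \norm{\cE_{\vtheta}(\vx_j)-\cE_{\vtheta}(\vz_l)}_2 > r\}$ and $A_2 := \{(i,j): \norm{\cE_{\vtheta}(\vx_i)-\cE_{\vtheta}(\vx_j)}_2 \le 2r,\ c(i)\ne c(j)\}$. The goodness assumptions, together with the uniform counts $N_l \equiv N_0$, $L_i \equiv L_0$ and the double-counting identity $NL_0 = LN_0$, give $|A_1| \le \epsilon_1 NL_0$ and $|A_2| \le \epsilon_2 N^2$. For each $(i,l)$ with $E^r_{il}=1$ the previous step forces every one of the $N_0$ positives of $l$ into $\{j : (j,l)\in A_1\} \cup \{j : y_{jl}=+1, (i,j)\in A_2\}$; summing over all such $(i,l)$ (and extending to all $(i,l)$ on the right, which is legal since the summands are non-negative) collapses the first piece to $N\cdot|A_1|$ (the inner sum over $l$ gives $|A_1|$, and the outer sum over $i$ contributes a factor $N$) and the second to $L_0\cdot|A_2|$ (reindex as $\sum_{(i,j)\in A_2} L_j$ and use $L_j\equiv L_0$). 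This yields $N_0 \sum_{i,l}\ind{E^r_{il}} \le N^2 L_0(\epsilon_1+\epsilon_2)$, and dividing by $NL \cdot N_0$ and invoking $NL_0=LN_0$ once more delivers $\tfrac{1}{NL}\sum_{i,l}\ind{E^r_{il}} \le \epsilon_1+\epsilon_2$, as required.

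The main obstacle is really the bookkeeping in the general version of Theorem~\ref{thm:main}: once $N_l$ and $L_i$ are allowed to vary, the ``factor of $N$'' and ``factor of $L_0$'' routes above inflate into ratios like $\max_l N_l/\bar N$ and $\max_i L_i/\bar L$, and these ratios are exactly what the constants $c_1, c_2$ bundle. The point of the corollary is that the uniform assumption collapses all such ratios to $1$ simultaneously, via the single identity $NL_0=LN_0$ that equates the two ways of counting positive $(i,l)$ pairs.
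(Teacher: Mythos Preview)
Your argument is correct: the triangle-inequality charging step and the subsequent double count both go through exactly as you describe, and the identity $NL_0 = LN_0$ is precisely what makes both constants collapse to $1$.

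The paper takes a shorter route for the corollary itself. Rather than re-running the counting argument in the uniform setting, it simply substitutes the uniform assumptions into the explicit constants from Theorem~\ref{thm:main-full}, namely $c_1 = \bar q(\mu_1 + \sigma_1\sqrt L)/N$ and $c_2 = (\bar p + \sigma_2\sqrt N)N/(q_{\min}L)$. With $q_l \equiv q$ and $p_i \equiv p$ one has $\sigma_1 = \sigma_2 = 0$, $\mu_1 = (N-q)/q$, $\bar q = q_{\min} = q$, $\bar p = p$, so $c_1 = (N-q)/N \leq 1$ and $c_2 = pN/(qL) = 1$ via the same double-counting identity $pN = qL$ you used. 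Your approach is essentially the paper's proof of Theorem~\ref{thm:main-full} specialized inline to the uniform case, which lets you skip the Cauchy--Schwarz steps that the paper needs to handle the variance terms in general; the paper's approach instead treats the corollary as a one-line consequence of the general formula. Both rest on the same averaging inequality $\ind{E^r_{il}} \leq \frac{1}{q_l}\sum_{j:y_{jl}=1}(\ind{F^{2r}_{ij}} + \ind{G^r_{jl}})$, just organized differently.
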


Upon making certain simplifying assumptions, \alg is also able to guarantee convergence to an approximate first-order stationary point. For sake of simplicity, this result is presented for Module M1 (Encoder Training) but a similar result holds for Module M2 (Classifier Training) as well.

\begin{theorem}[First-order Convergence Guarantee]
\label{thm:conv-informal}
Suppose \alg is executed with full-batch gradient updates with eager clustering i.e. $\tau = 1$ in Algorithm~\ref{algo:iteration}. Also suppose the loss function and architecture are smooth and offer bounded gradients. Also, let $\epsilon_{\text{tot}}^t \deff c_1\epsilon_1^t + c_2\epsilon_2^t$ denote the total error assured by Theorem~\ref{thm:main-full} in terms of hard-negative terms missed by \alg at iteration $t$. Then there exists a smoothed objective $\tilde\cL$ such that within $T$ iterations, \alg arrives at a parameter $\vtheta$ that either satisfies
\[
\norm{\nabla\tilde\cL(\vtheta)}_2 \leq \bigO{\frac1T}
\]
or else for some $t \leq T$, it satisfies
\[
\norm{\nabla\tilde\cL(\vtheta)}_2 \leq \bigO{\epsilon_{\text{tot}}^t}.
\]
\end{theorem}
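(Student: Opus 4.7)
The plan is to reduce the analysis to a standard inexact gradient descent argument on a smooth surrogate. Three ingredients are needed: (i) construct the smoothed objective $\tilde\cL$ that \alg is implicitly optimising; (ii) translate the negative-mining accuracy guarantee of Theorem~\ref{thm:main} into a per-iteration gradient-approximation bound; and (iii) feed the result into the descent lemma for smooth non-convex functions.

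First, I would define $\tilde\cL$ as the restriction of the loss in \eqref{eq:loss} to triplets $(i,l,k)$ whose negative $k$ is $r$-hard for the anchor $i$, with the hard selection softened by a smooth gate $\phi_r(\norm{\cE_{\vtheta}(\vx_i) - \cE_{\vtheta}(\vz_k)}_2)$ that approximates the indicator of distance $\leq r$. The smoothness of $\cE_\vtheta$ and of the loss, together with the Lipschitz slope of $\phi_r$, give that $\tilde\cL$ is $\beta$-smooth with uniformly bounded gradient. Morally, $\tilde\cL$ is the population quantity that \alg's in-batch negatives drawn from close-by clusters are trying to hit.

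Second, I would bound the per-iteration gradient error $\mathbf{e}^t \deff \vg^t - \nabla\tilde\cL(\vtheta^t)$, where $\vg^t$ is the full-batch gradient actually used at iteration $t$. Because $\tau = 1$, step 3 of Algorithm~\ref{algo:iteration} re-clusters with embeddings $\cE_{\vtheta^t}$, so Theorem~\ref{thm:main} applies at iteration $t$ with error $\epsilon_{\text{tot}}^t = c_1\epsilon_1^t + c_2\epsilon_2^t$. Every triplet $(i,l,k)$ that is a genuine $r$-hard negative but is missed by \alg (the event $E^r_{il}$) contributes exactly one term of bounded gradient norm to $\mathbf{e}^t$; summing and invoking Theorem~\ref{thm:main} yields $\norm{\mathbf{e}^t}_2 \leq B \cdot \epsilon_{\text{tot}}^t$ for a constant $B$ absorbing the per-term gradient bound $G$ and elementary counts such as $N$ and $L$.

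Third, I would apply the inexact descent lemma for $\beta$-smooth functions with step size $\eta = 1/\beta$:
\[
\tilde\cL(\vtheta^{t+1}) \leq \tilde\cL(\vtheta^t) - \tfrac{\eta}{2}\norm{\nabla\tilde\cL(\vtheta^t)}_2^2 + \tfrac{\eta}{2}\norm{\mathbf{e}^t}_2^2.
\]
Telescoping from $t = 0$ to $T-1$ and using that $\tilde\cL$ is bounded below produces the stated dichotomy: either at every iteration the squared true gradient dominates the squared error, so averaging and the min-argmin trick yield some iterate with $\norm{\nabla\tilde\cL(\vtheta)}_2 = \bigO{1/T}$ (in the squared-norm sense absorbed in $\bigO{\cdot}$); or else at some iteration $t \leq T$ the error term dominates, which by definition of $B$ means $\norm{\nabla\tilde\cL(\vtheta^t)}_2 = \bigO{\epsilon_{\text{tot}}^t}$.

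The main obstacle will be the first step: Theorem~\ref{thm:main} is phrased using a strict threshold $r$, whereas any reasonable $\tilde\cL$ must be smooth in $\vtheta$. A clean resolution is to use two thresholds $r < r'$ (\emph{hard} and \emph{very hard}) with $\phi_r$ transitioning smoothly between them, and to invoke Theorem~\ref{thm:main} at both so that the smoothing boundary and the miss set are each controlled by $\epsilon_{\text{tot}}^t$. A secondary subtlety is the hinge in \eqref{eq:loss}, which must also be mollified (for example to a smooth hinge) to honour the theorem's smoothness hypothesis; this only inflates the constants in $B$ and $\beta$ and does not affect the dichotomy above.
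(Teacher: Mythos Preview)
Your proposal is correct and follows the same three-step skeleton as the paper: construct a smoothed surrogate of the hard-negative loss, bound the per-iteration gradient error via Theorem~\ref{thm:main}, and extract the dichotomy from a descent argument on a smooth function.

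The one genuine difference is in how the dichotomy is produced in Step~3. You work directly with the additive-error descent inequality
\[
\tilde\cL(\vtheta^{t+1}) \leq \tilde\cL(\vtheta^t) - \tfrac{\eta}{2}\norm{\nabla\tilde\cL(\vtheta^t)}_2^2 + \tfrac{\eta}{2}\norm{\mathbf e^t}_2^2
\]
and split cases on which of the two right-hand terms dominates. The paper instead first tries to convert the absolute bound $\norm{\mathbf e^t}_2 \leq G\epsilon_{\text{tot}}^t$ into a \emph{relative} bound $\norm{\mathbf e^t}_2 \leq \phi\,\norm{\nabla\tilde\cL(\vtheta^t)}_2$; when the conversion fails one is already at an $\bigO{\epsilon_{\text{tot}}^t}$-stationary point, and when it succeeds a standalone relative-bias descent lemma gives the $\bigO{1/T}$ branch. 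Your route is the more elementary one; the paper's buys a reusable lemma at the price of an extra layer of indirection.

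For the smoothing in Step~1, the paper avoids your two-threshold device by using a one-sided exponential gate $d_\Lambda(v,r)=\min\{e^{-\Lambda(v-r)},1\}$ and simply taking $\Lambda$ large enough that the gradient discrepancy between $\tilde\cL$ and the strict-threshold objective is dominated by $\epsilon_{\text{tot}}^t$. This is purely an analysis knob (it never enters the algorithm) and sidesteps the need to invoke Theorem~\ref{thm:main} at a second radius, which is the one place your sketch would require a little extra bookkeeping.
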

We note that some of the assumptions (full batch descent, eager clustering etc) are not essential to this result and merely simplify the proof whereas other assumptions (smoothness, bounded gradients) are standard in literature. The exact statement of the assumptions, a description of the smoothed objective $\tilde\cL$, the description of the iterate $t$ for which $\epsilon^t_{\text{tot}}$ bounds the error in the second case, and the proof, are all available in Appendix~\ref{supp:theory}.

\section{Experiments} \label{sec:results}
\begin{table}
	\caption{\small Dataset Statistics. For all datasets, $L = \Theta(N)$ and data points typically have $\bigO{\log L}$ relevant labels. The public datasets can be downloaded from The Extreme Classification Repository~\cite{XMLRepo}. 
	}
	\label{tab:data_stats}
	    \centering
	\resizebox{0.8\linewidth}{!}{
		\begin{tabular}{l|ccccc}
			\toprule
			\textbf{Dataset} & 
			\makecell{\textbf{Train}\\$N$} & 
			\makecell{\textbf{Labels}\\$L$}  & 
			\makecell{\textbf{Test}\\ $N'$} & 
			\makecell{\textbf{Avg. data points}\\\textbf{per label}} &
			\makecell{\textbf{Avg. labels}\\\textbf{per data point}} \\
			\midrule
			\multicolumn{6}{c}{Short-text benchmark datasets}\\ \midrule
            LF-AmazonTitles-131K & 294,805 & 131,073 & 134,835 & 2.29 & 5.15 \\
            LF-AmazonTitles-1.3M & 2,248,619 & 1,305,265 &  970,237 &  22.20 & 38.24 \\
            \midrule
			\multicolumn{6}{c}{Full-text benchmark datasets}\\ \midrule
            LF-Amazon-131K & 294,805 & 131,073 & 134,835 & 2.29 & 5.15  \\
            LF-WikiSeeAlso-320K & 693,082 & 312,330 & 177,515 & 2.11 & 4.68 \\
            LF-Wikipedia-500K & 1,813,391 & 501,070 & 783,743 & 4.77 & 24.75 \\
            \bottomrule
    	\end{tabular}
	}
\end{table}

\textbf{Datasets}: Multiple short-text as well as full-text benchmark datasets were considered in this paper which can be downloaded from the Extreme Classification Repository~\cite{XMLRepo}. Both title and detailed content were available for full-text datasets~(LF-Amazon-131K, LF-WikiSeeAlso-320K and LF-Wikipedia-500K) whereas only the product/webpage titles were available for the short-text datasets~(LF-AmazonTitles-131K and LF-AmazonTitles-1.3M). These datasets cover a variety of applications including product-to-product recommendation~(LF-Amazon-131K, LF-AmazonTitles-131K, and LF-AmazonTitles-1.3M), predicting related Wikipedia pages~(LF-WikiSeeAlso-320K) and predicting Wikipedia categories~(LF-Wikipedia-500K). Please refer to Table~\ref{tab:data_stats} for data statistics.

\textbf{Baselines}: Siamese methods for XC such as SiameseXML~\cite{Dahiya21b}, DECAF~\cite{Mittal21}, and ECLARE~\cite{Mittal21b} are the main baselines for \alg. Other significant baselines include non-Siamese deep XC methods such as XR-Transformer~\cite{Zhang21}, LightXML~\cite{Jiang21}, BERTXML~\cite{Chalkidis19}, MACH~\cite{Medini19}, AttentionXML~\cite{You18}, X-Transformer~\cite{Chang20} and Astec~\cite{Dahiya21}. Note that these include methods such as XR-Transformer~\cite{Zhang21}, BERTXML~\cite{Chalkidis19} and LightXML~\cite{Jiang21} that also rely on transformer encoders albeit those that are not trained in a Siamese fashion.
For sake of completeness, results are also reported for classical XC methods Bonsai~\cite{Khandagale19}, DiSMEC~\cite{Babbar17}, Parabel~\cite{Prabhu18b}, XT~\cite{Wydmuch18}, and AnnexML~\cite{Tagami17}  (please refer to Section~\ref{app:results} in the \suppl for all results). Implementations provided by the respective authors were used for all methods. 

\textbf{Hyper-parameters}: \alg's hyper-parameters are: (i) cluster size, (ii) batch size, (iii) interval $\tau$ between clustering updates and (iv) margin value $\gamma$. The Adam optimizer was used to learn model parameters and its hyper-parameters include learning rate and number of epochs. \alg did not require much hyper-parameter tuning and default values were used for all hyper-parameters except for number of epochs. Please see Section~\ref{app:implementation} in \suppl for a detailed discussion on hyper-parameters. \alg's encoder $\cE_{\vtheta}$ was initialized with the 6-layered DistilBERT base~\cite{Sanh2019DistilBERTAD, Wolf20} to encode data points and labels. The hyper-parameters of baseline algorithms were set as suggested by their authors wherever applicable and by fine-grained validation otherwise.
  
\begin{table}
    \caption{\small Results on full-text benchmark datasets. See the \suppl for full results. TT refers to training time in hours on a single Nvidia V100 GPU.}
    \label{tab:results_repo_full_text}
      \centering
      \resizebox{\linewidth}{!}{
        \begin{tabular}{@{}l|ccccccccccc@{}}
        \toprule
        \textbf{Method} & \textbf{P@1} & \textbf{P@3} & \textbf{P@5} & \textbf{N@3} & \textbf{N@5} & \textbf{PSP@1} & \textbf{PSP@3} & \textbf{PSP@5} & \textbf{PSN@3} & \textbf{PSN@5} & \textbf{TT} \\ \midrule

        \multicolumn{12}{c}{LF-Wikipedia-500K}\\ \midrule
        \alg & \textbf{84.01} & \textbf{64.69} & \textbf{49.97} & \textbf{78.25} & \textbf{75.97} & \textbf{41.25} & \textbf{52.57} & \textbf{57.04} & \textbf{51.58} & \textbf{56.11} & 54.88  \\ 
        DPR & 79.91 & 59.51 & 45.9 & 72.69 & 70.58 & 37.57 & 46.51 & 50.7 & 45.96 & 50.16 & 54.67 \\ 
        TAS & 82.23 & 62.7 & 48.36 & 75.9 & 73.5 & 38.43 & 48.38 & 52.83 & 47.59 & 51.9 & 54.68 \\ 
        ANCE & 76.9 & 57.64 & 45.1 & 70.61 & 69.39 & 37.75 & 44.65 & 48.85 & 45.08 & 49.65  & 75.08 \\ 
        SiameseXML & 67.26 & 44.82 & 33.73 & 56.64 & 54.29 & 33.95 & 35.46 & 37.07 & 36.58 & 38.93 & 4.37 \\ 
        ECLARE & 68.04 & 46.44 & 35.74 & 58.15 & 56.37 & 31.02 & 35.39 & 38.29 & 35.66 & 38.72 & 9.4 \\ 
        DECAF & 73.96 & 54.17 & 42.43 & 66.31 & 64.81 & 32.13 & 40.13 & 44.59 & 39.57 & 43.7 & 13.4 \\ 
        XR-Transformer & 81.62 & 61.38 & 47.85 & 74.46 & 72.43 & 33.58 & 42.97 & 47.81 & 42.21 & 46.61 & 119.47 \\ 
        LightXML & 81.59 & 61.78 & 47.64 & 74.73 & 72.23 & 31.99 & 42 & 46.53 & 40.99 & 45.18 & 249 \\ 
        Astec & 73.02 & 52.02 & 40.53 & 64.1 & 62.32 & 30.69 & 36.48 & 40.38 & 36.33 & 39.84 & 6.39 \\ 
        Bonsai & 69.2 & 49.8 & 38.8 & 60.99 & 59.16 & 27.46 & 32.25 & 35.48 & $-$ & $-$ & 1.39 \\
        \midrule
        
        \multicolumn{12}{c}{LF-WikiSeeAlso-320K}\\ \midrule
        \alg & \textbf{47.65} & \textbf{31.56} & \textbf{23.68} & \textbf{47.5} & \textbf{48.99} & \textbf{33.83} & \textbf{37.79} & \textbf{41.03} & \textbf{38.36} & \textbf{41.01} & 75.39 \\ 
        SiamseXML & 42.16 & 28.14 & 21.39 & 41.79 & 43.36 & 29.02 &	32.68 & 36.03 & 32.64 & 35.17 & 2.33 \\ 
        ECLARE & 40.58 & 26.86 & 20.14 & 40.05 & 41.23 & 26.04 & 30.09 & 33.01 & 30.06 & 32.32 & 9.4 \\ 
        DECAF & 41.36 & 28.04 & 21.38 & 41.55 & 43.32 & 25.72 & 30.93 & 34.89 & 30.69 & 33.69 & 13.4 \\ 
        XR-Transformers & 42.57 & 28.24 & 21.3 & 41.99 & 43.44 & 25.18 & 30.13 & 33.79 & 29.84 & 32.59 & 119.47 \\ 
        LightXML & 34.5 & 22.31 & 16.83 & 33.21 & 34.24 & 17.85 & 21.26 & 24.16 & 20.81 & 22.8 & 249 \\ 
        BERTXML & 42.63 & 27.65 & 20.41 & 41.8 & 42.88 & 26.16 & 31.41 & 34.63 & 31.2 & 33.8 & 116.67 \\
        Astec & 40.07 & 26.69 & 20.36 & 39.36 & 40.88 & 23.41 & 28.08 & 31.92 & 27.48 & 30.17 & 6.39 \\ 
        Bonsai & 34.86 & 23.21 & 17.66 & 34.09 & 35.32 & 18.19 & 22.35 & 25.66 & 21.62 & 23.84 & 1.39 \\ 

        \midrule
        
        \multicolumn{12}{c}{LF-Amazon-131K}\\ \midrule
        \alg & \textbf{46.53} & \textbf{30.89} & \textbf{22.02} & \textbf{47.44} & 49.58 & \textbf{38.53} & \textbf{44.95} & \textbf{50.45} & \textbf{43.07} & \textbf{45.81} & 39.99\\
        SiameseXML & 44.81 & 30.19 & 21.94 & 46.15 & 48.76 & 37.56 &	43.69 & 49.75 & 41.91 & 44.97 & 1.18 \\ 
        ECLARE & 43.56 & 29.65 & 21.57 & 45.24 & 47.82 & 34.98 & 42.38 & 48.53 & 40.3 & 43.37 & 2.15 \\ 
        DECAF & 42.94 & 28.79 & 21 & 44.25 & 46.84 & 34.52 & 41.14 & 47.33 & 39.35 & 42.48 & 1.8 \\ 
        XR-Transformer & 45.61 & \textbf{30.85} & 22.32 & 47.1 & \textbf{49.65} & 34.93 & 42.83 & 49.24 & 40.67 & 43.91 & 38.4 \\ 
        LightXML & 41.49 & 28.32 & 20.75 & 42.7 & 45.23 & 30.27 & 37.71 & 44.1 & 35.2 & 38.28 & 56.03 \\ 
        BERTXML & 42.59	& 28.39 & 20.27	& 43.57 & 45.61 & 33.55 & 40.83 & 46.4 & 38.8 & 41.61 & 48.11 \\
        Astec & 42.22 & 28.62 & 20.85 & 43.57 & 46.06 & 32.95 & 39.42 & 45.3 & 37.45 & 40.35 & 3.05 \\ 
        Bonsai & 40.23 & 27.29 & 19.87 & 41.46 & 43.84 & 29.6 & 36.52 & 42.39 & 34.43 & 37.34 & 0.4 \\ 
        MACH & 34.52 & 23.39 & 17 & 35.53 & 37.51 & 25.27 & 30.71 & 35.42 & 29.02 & 31.33 & 13.91 \\ 

        \bottomrule

    \end{tabular}
    }
\end{table}

\begin{table}
    \caption{\small Results on short-text benchmark datasets. See the \suppl for full results. TT refers to training time in hours on a single Nvidia V100 GPU. $-$ denotes that results are unavailable.}
    \label{tab:results_repo_short_text}
      \centering
      \resizebox{\linewidth}{!}{
        \begin{tabular}{@{}l|ccccccccccc@{}}
        \toprule
        \textbf{Method} & \textbf{P@1} & \textbf{P@3} & \textbf{P@5} & \textbf{N@3} & \textbf{N@5} & \textbf{PSP@1} & \textbf{PSP@3} & \textbf{PSP@5} & \textbf{PSN@3} & \textbf{PSN@5} & \textbf{TT} \\ \midrule

        \multicolumn{12}{c}{LF-AmazonTitles-1.3M}\\ \midrule
        \alg & \textbf{56.75} & \textbf{49.19} & \textbf{44.09} & \textbf{53.84} & \textbf{52.41} & 29.18 & 33.01 & 35.36 & 32.07 & 33.91 &  97.75 \\
        DPR & 51.87 & 45.85 & 41.34 & 50.19 & 49.24 & 29.93 & 34.49 & 37.08 & 33.43 & 35.48  &  96.83 \\ 
        TAS & 51.2 & 44.65 & 40 & 48.88 & 47.62 & 28.53 & 32.03 & 34 & 31.17 & 32.76 & 96.87\\
        ANCE &  53.32 & 46.61 & 40.24 & 51.3 & 49.11 & \textbf{31.47} & \textbf{34.97} & \textbf{35.67} & \textbf{34.41} & \textbf{35.57}  & 447.25 \\ 
        SiameseXML & 49.02 & 42.72 & 38.52 & 46.38 & 45.15 & 27.12	& 30.43 & 32.52 & 29.41 & 30.9 & 9.89 \\ 
        ECLARE & 50.14 & 44.09 & 40 & 47.75 & 46.68 & 23.43 & 27.9 & 30.56 & 26.67 & 28.61 & 70.59 \\ 
        DECAF & 50.67 & 44.49 & 40.35 & 48.05 & 46.85 & 22.07 & 26.54 & 29.3 & 25.06 & 26.85 & 74.47 \\ 
        XR-Transformer & 50.14 & 44.07 & 39.98 & 47.71 & 46.59 & 20.06 & 24.85 & 27.79 & 23.44 & 25.41 & 132 \\ 
        LightXML & $-$ & $-$ & $-$ & $-$ & $-$ & $-$ & $-$ & $-$ & $-$ & $-$ & $-$ \\
        Astec & 48.82 & 42.62 & 38.44 & 46.11 & 44.8 & 21.47 & 25.41 & 27.86 & 24.08 & 25.66 & 18.54 \\ 
        Bonsai & 47.87 & 42.19 & 38.34 & 45.47 & 44.35 & 18.48 & 23.06 & 25.95 & 21.52 & 23.33 & 7.89 \\  
        \midrule
        
        \multicolumn{12}{c}{LF-AmazonTitles-131K}\\ \midrule
        \alg & \textbf{46.01} & \textbf{30.28} & \textbf{21.47} & \textbf{46.69} & \textbf{48.67} & \textbf{38.81} & \textbf{44.4} & \textbf{49.43} & \textbf{42.79} & \textbf{45.31} & 12.59 \\ 
        SiameseXML & 41.42 & 27.92 & 21.21 & 42.65 & 44.95 & 35.80 & 40.96 & 46.19 & 39.36 & 41.95 & 1.08 \\ 
        ECLARE & 40.74 & 27.54 & 19.88 & 42.01 & 44.16 & 33.51 & 39.55 & 44.7 & 37.7 & 40.21 & 2.16 \\ 
        DECAF & 38.4 & 25.84 & 18.65 & 39.43 & 41.46 & 30.85 & 36.44 & 41.42 & 34.69 & 37.13 & 2.16 \\ 
        XR-Transformer & 38.1 & 25.57 & 18.32 & 38.89 & 40.71 & 28.86 & 34.85 & 39.59 & 32.92 & 35.21 & 35.4 \\ 
        LightXML & 35.6 & 24.15 & 17.45 & 36.33 & 38.17 & 25.67 & 31.66 & 36.44 & 29.43 & 31.68 & 71.4 \\ 
        BERTXML & 38.89	& 26.17	& 18.72	& 39.93	& 41.79	& 30.1	& 36.81 &	41.85 &	34.8 & 37.28 & 12.55\\
        Astec & 37.12 & 25.2 & 18.24 & 38.17 & 40.16 & 29.22 & 34.64 & 39.49 & 32.73 & 35.03 & 1.83 \\ 
        Bonsai & 34.11 & 23.06 & 16.63 & 34.81 & 36.57 & 24.75 & 30.35 & 34.86 & 28.32 & 30.47 & 0.1 \\ 
        \bottomrule
    \end{tabular}
    }
\end{table}

\begin{table}[t]
    \centering
    \caption{Results on the PR-85M dataset for personalized ad recommendations}
	\label{tab:results_prop}
        \begin{tabular}{@{}l|ccccccc}
        \toprule
        \textbf{Method} & \textbf{P@1} & \textbf{P@3} & \textbf{P@5} &  \textbf{N@5} & \textbf{R@5} & \textbf{PSP@1} & \textbf{PSP@5} \\
        \midrule
        \alg & \textbf{30.77} & \textbf{18.09} & \textbf{13.20} & \textbf{32.46} & \textbf{33.82} & 24.90 & 32.87 \\ 
        ANCE & 24.97 & 15.52 & 11.72 & 28.68 & 31.64 & \textbf{26.48} & \textbf{33.55}\\ 
        SiameseXML & 27.60 & 16.45 & 12.59 & 30.28 & 30.61 & 17.46 & 24.89 \\
        \bottomrule
        \end{tabular}
\end{table}

\textbf{Evaluation metrics}: Algorithms were evaluated using popular metrics such as precision@$k$ (P@$k$, $k \in {1, 5}$) and their propensity-scored variants precision@$k$ (PSP@$k$, $k \in {1, 5}$). Results on other metrics such as nDCG@$k$ (N@$k$) \& propensity scored nDCG@$k$ (PSN@$k$) are included in Section~\ref{app:eval} in the \suppl. Definitions of all these metrics are available at \cite{XMLRepo} and definitions of metrics used in A/B testing experiments are provided in Section~\ref{app:eval} in the \suppl. 

\textbf{Offline evaluation on benchmark XC datasets}: 
\alg's P@1 could be up to 16\% higher than leading Siamese methods for XC including SiameseXML, ECLARE, and DECAF which are the focus of the paper. This demonstrates that \alg's design choices lead to significant gains over these methods. Please refer to the ablation experiments in Section~\ref{app:results} for detailed discussion on impact of \alg's components.  

Table~\ref{tab:results_repo_full_text} presents results on full-text datasets where \alg could also be upto 13\% and 15\% more accurate in P@$k$ and PSP@$k$ respectively when compared to leading deepXC methods such as LightXML and XR-Transformer. It is notable that these methods also use transformer architectures indicating the effectiveness of \alg's training pipeline. It is also notable that \alg outperforms a range of other negative sampling algorithms such as TAS \cite{hofstatter21}, DPR \cite{Karpukhin20} and ANCE~\cite{Zhang21}.
Similar results were observed on short-text datasets, where \alg could be up to 11\% and 13\% more accurate in P@1 and PSP@1 respectively, compared to specialized algorithms designed for short-texts including SiameseXML, Astec, DECAF, {\it etc}. \alg also continues to outperform other negative sampling algorithms such as TAS \cite{hofstatter21}, DPR \cite{Karpukhin20}, and ANCE~\cite{Zhang21}.

Curiously, \cite{Dahiya21} observed that jointly training a high-capacity feature extractor such as \bert along with the classification layer (as opposed to training them in a modular manner as done by \alg) could yield inferior results, especially on short-text datasets. We observe a similar trend where \alg's pipeline yielded 7\% better P@1 as compared to the same architecture trained in an end-to-end manner (referred as BERTXML in Table~\ref{tab:results_repo_short_text}). The two-stage training employed by \alg where the transformer-based encoder is first trained in a Siamese fashion was found to address this challenge and yield state-of-the-art results on short-text datasets as well.

\textbf{Analysis of Results}: Tail labels contribute significantly to the performance of both components of \alg's classifier -- $\cE_{\vtheta}(\vz_l)$ and $\Delta \vw_l$. Fig.~\ref{fig:decile} in the appendix presents decile-wise analysis indicating that NGAME derives its superior performance not just by predicting popular labels but predicting rare labels accurately as well. The same was corroborated by NGAME's performance in live A/B testing on Sponsored Search where it was able to predict labels that were not being predicted by the existing ensemble of methods. \alg's final predictions which make use of both components get the best out of both leading to overall superior performance. 

\textbf{Comparison to other Negative Mining Algorithms}: Results from Tables~\ref{tab:results_repo_short_text}, \ref{tab:results_repo_full_text} and Figure~\ref{fig:convergence} establish that \alg offers superior accuracies as well as faster convergence compare to a range of existing negative mining algorithms such as TAS \cite{hofstatter21}, DPR \cite{Karpukhin20}, O-SGD \cite{Kawaguchi20}, and ANCE~\cite{Zhang21}.

\textbf{Live A/B testing and offline evaluation on Personalized Ad Recommendation}: \alg was used to predict queries that could lead to clicks on a given webpage in the pipeline to show personalized ads to users and was compared to an existing ensemble of state-of-the-art IR, dense retrieval (DR) and XC techniques. In A/B tests on live traffic, \alg was found to increase Click-Through Rate (CTR) and Click Yield (CY) by 23\% and 19\% respectively. In manual labeling by expert judges, \alg was found to increase the quality of predictions, measured in terms of fraction of excellent and good predictions, by 16\% over the ensemble of baselines. The PR-85M dataset was created to capture this inverted prediction task by mining the search engine logs for a specific time period where each webpage title became a data point and search engine queries that led to a click on that webpage became labels relevant to that data point. \alg was found to be at least 2\% more accurate than leading XC as well as Siamese encoder-based methods including ANCE and SiameseXML in R@5 metric on the PR-85M dataset. Please see Table~\ref{tab:results_prop} for detailed results.

\textbf{Live A/B testing on Sponsored Search}\label{par:live_ab_testing}: \alg was deployed on a popular search engine and A/B tests were performed on live search engine traffic for matching user queries to advertiser bid phrases (Query2Bid). \alg was compared to an ensemble of leading (proprietary) IR, XC, generative and graph-based techniques. \alg was found to increase Impression Yield (IY), CY and Query Coverage (QC) by 1.3\%, 1.23\% and 2.12\%, respectively. The IY boost indicates that \alg was able to discover more ads which were not being captured by the existing ensemble of algorithms. The CY boost indicates that ads surfaced by \alg were more relevant to the end user. The QC boost indicates that \alg impressed ads on several queries for which ads were previously not being shown.

\section{Conclusions and Future Work}
\label{sec:broad_impact}
This work accelerates the training of XC architectures that use large Siamese encoders such as transformers. A key step towards doing this is the identification of negative mining techniques as a bottleneck that forces mini-batch sizes to remain small and in turn, slowing down convergence. The paper proposes the \alg method that uses negative-mining-aware mini-batch creation to train Siamese XC methods and effectively train large encoder architectures such as transformers making effective use of label-text. There exist other forms of label-metadata that have been exploited in XC literature, for instance label hierarchies and correlation graphs. Although experiments show that \alg outperforms these methods by use of more powerful architectures alone, it remains to be seen how \alg variants using graph/tree metadata would perform. It would also be interesting to explore interpretability directions to better understand situations where \alg stands to improve. In terms of theoretical results, it is a tantalizing opportunity to relate the $(\epsilon,r)$-goodness of the embedding model to the training loss of the model. This would set up a virtuous cycle and possibly offer stronger convergence proofs since progress in terms of the training loss would improve the goodness of the model, in turn offering better negatives leading to faster training, and so on.

\bibliographystyle{unsrt}
\bibliography{references}

\clearpage
\appendix

\begin{table}
    \caption{Breakdown of computation costs of different negative mining methods}
    \label{tab:ablation_speedup}
      \centering
        \begin{tabular}{@{}l|ccc|ccc@{}}
        \toprule
        \textbf{Metric} & \textbf{DPR} & \textbf{\alg} &  \textbf{ANCE} & \textbf{DPR} & \textbf{\alg} &  \textbf{ANCE} \\
        \midrule
            & \multicolumn{3}{c|}{LF-Wikipedia-500K} & \multicolumn{3}{c}{LF-AmazonTitles-1.3M} \\  \midrule
            Epoch time & 4710s & 4710s & 6360s & 1092s & 1092s & 3120s \\
            Sampling overhead & - & 6s & 1131.22s & - & 12s & 262.16s \\ 
            Total time & 4710s & 4716s & 7491.22s & 1092s & 1104s & 3382.16s \\ 
            Fraction Increase & - & 1.00 & 1.59 & - & 1.01 & 3.10 \\
            \bottomrule
        \end{tabular}
\end{table}

\section{Results}
\label{app:results}

\begin{table}[t]
    \caption{\small Detailed results on short-text datasets. TT refers to training time in hours on a single Nvidia V100 GPU.}
    \label{tab:supp:results_short_text}
      \centering
      \resizebox{\linewidth}{!}{
        \begin{tabular}{@{}l|ccccccccccc@{}}
        \toprule
        \textbf{Method} & \textbf{P@1} & \textbf{P@3} & \textbf{P@5} & \textbf{N@3} & \textbf{N@5} & \textbf{PSP@1} & \textbf{PSP@3} & \textbf{PSP@5} & \textbf{PSN@3} & \textbf{PSN@5} & \textbf{TT} \\ \midrule
        
        \midrule
        \multicolumn{12}{c}{LF-AmazonTitles-1.3M}\\ \midrule
        \alg & 56.75 & 49.19 & 44.09 & 53.84 & 52.41 & 29.18 & 33.01 & 35.36 & 32.07 & 33.91 &  97.75 \\
        SiameseXML & 49.02 & 42.72 & 38.52 & 46.38 & 45.15 & 27.12	& 30.43 & 32.52 & 29.41 & 30.9 & 9.89 \\ 
        ECLARE & 50.14 & 44.09 & 40 & 47.75 & 46.68 & 23.43 & 27.9 & 30.56 & 26.67 & 28.61 & 70.59 \\ 
        GalaXC & 49.81 & 44.23 & 40.12 & 47.64 & 46.47 & 25.22 & 29.12 & 31.44 & 27.81 & 29.36 & 9.55 \\ 
        DECAF & 50.67 & 44.49 & 40.35 & 48.05 & 46.85 & 22.07 & 26.54 & 29.3 & 25.06 & 26.85 & 74.47 \\ 
        Astec & 48.82 & 42.62 & 38.44 & 46.11 & 44.8 & 21.47 & 25.41 & 27.86 & 24.08 & 25.66 & 18.54 \\ 
        AttentionXML & 45.04 & 39.71 & 36.25 & 42.42 & 41.23 & 15.97 & 19.9 & 22.54 & 18.23 & 19.6 & 380.02 \\ 
        MACH & 35.68 & 31.22 & 28.35 & 33.42 & 32.27 & 9.32 & 11.65 & 13.26 & 10.79 & 11.65 & 60.39 \\ 
        X-Transformer & - & - & - & - & - & - & - & - & - & - & - \\ 
        LightXML & - & - & - & - & - & - & - & - & - & - & - \\ 
        AnneXML & 47.79 & 41.65 & 36.91 & 44.83 & 42.93 & 15.42 & 19.67 & 21.91 & 18.05 & 19.36 & 2.48 \\ 
        DiSMEC & - & - & - & - & - & - & - & - & - & - & - \\ 
        Parabel & 46.79 & 41.36 & 37.65 & 44.39 & 43.25 & 16.94 & 21.31 & 24.13 & 19.7 & 21.34 & 1.5 \\ 
        XT & 40.6 & 35.74 & 32.01 & 38.18 & 36.68 & 13.67 & 17.11 & 19.06 & 15.64 & 16.65 & 82.18 \\ 
        Slice & 34.8 & 30.58 & 27.71 & 32.72 & 31.69 & 13.96 & 17.08 & 19.14 & 15.83 & 16.97 & 0.79 \\ 
        PfastreXML & 37.08 & 33.77 & 31.43 & 36.61 & 36.61 & 28.71 & 30.98 & 32.51 & 29.92 & 30.73 & 9.66 \\ 
        Bonsai & 47.87 & 42.19 & 38.34 & 45.47 & 44.35 & 18.48 & 23.06 & 25.95 & 21.52 & 23.33 & 7.89 \\ 
        XR-Transformer & 50.14 & 44.07 & 39.98 & 47.71 & 46.59 & 20.06 & 24.85 & 27.79 & 23.44 & 25.41 & 132 \\ 
        DPR & 51.87 & 45.85 & 41.34 & 50.19 & 49.24 & 29.93 & 34.49 & 37.08 & 33.43 & 35.48  &  96.83 \\ 
        TAS & 51.2 & 44.65 & 40 & 48.88 & 47.62 & 28.53 & 32.03 & 34 & 31.17 & 32.76 & 96.87\\
        ANCE &  53.32 & 46.61 & 40.24 & 51.3 & 49.11 & 31.47 & 34.97 & 35.67 & 34.41 & 35.57  & 447.25 \\

        \midrule
        \multicolumn{12}{c}{LF-AmazonTitles-131K}\\ \midrule
        \alg & 46.01 & 30.28 & 21.47 & 46.69 & 48.67 & 38.81 & 44.4 & 49.43 & 42.79 & 45.31 & 12.59 \\ 
        SiameseXML & 41.42 & 27.92 & 21.21 & 42.65 & 44.95 & 35.80 & 40.96 & 46.19 & 39.36 & 41.95 & 1.08 \\ 
        ECLARE & 40.74 & 27.54 & 19.88 & 42.01 & 44.16 & 33.51 & 39.55 & 44.7 & 37.7 & 40.21 & 2.16 \\ 
        GalaXC & 39.17 & 26.85 & 19.49 & 40.82 & 43.06 & 32.5 & 38.79 & 43.95 & 36.86 & 39.37 & 0.42 \\ 
        DECAF & 38.4 & 25.84 & 18.65 & 39.43 & 41.46 & 30.85 & 36.44 & 41.42 & 34.69 & 37.13 & 2.16 \\ 
        Astec & 37.12 & 25.2 & 18.24 & 38.17 & 40.16 & 29.22 & 34.64 & 39.49 & 32.73 & 35.03 & 1.83 \\ 
        AttentionXML & 32.25 & 21.7 & 15.61 & 32.83 & 34.42 & 23.97 & 28.6 & 32.57 & 26.88 & 28.75 & 20.73 \\ 
        MACH & 33.49 & 22.71 & 16.45 & 34.36 & 36.16 & 24.97 & 30.23 & 34.72 & 28.41 & 30.54 & 3.3 \\ 
        X-Transformer & 29.95 & 18.73 & 13.07 & 28.75 & 29.6 & 21.72 & 24.42 & 27.09 & 23.18 & 24.39 & 64.4 \\ 
        LightXML & 35.6 & 24.15 & 17.45 & 36.33 & 38.17 & 25.67 & 31.66 & 36.44 & 29.43 & 31.68 & 71.4 \\ 
        BERTXML & 38.89	& 26.17	& 18.72	& 39.93	& 41.79	& 30.1	& 36.81 &	41.85 &	34.8 & 37.28 & 12.55\\
        AnneXML & 30.05 & 21.25 & 16.02 & 31.58 & 34.05 & 19.23 & 26.09 & 32.26 & 23.64 & 26.6 & 0.08 \\ 
        DiSMEC & 35.14 & 23.88 & 17.24 & 36.17 & 38.06 & 25.86 & 32.11 & 36.97 & 30.09 & 32.47 & 3.1 \\ 
        Parabel & 32.6 & 21.8 & 15.61 & 32.96 & 34.47 & 23.27 & 28.21 & 32.14 & 26.36 & 28.21 & 0.03 \\ 
        XT & 31.41 & 21.39 & 15.48 & 32.17 & 33.86 & 22.37 & 27.51 & 31.64 & 25.58 & 27.52 & 9.46 \\ 
        Slice & 30.43 & 20.5 & 14.84 & 31.07 & 32.76 & 23.08 & 27.74 & 31.89 & 26.11 & 28.13 & 0.08 \\ 
        PfastreXML & 32.56 & 22.25 & 16.05 & 33.62 & 35.26 & 26.81 & 30.61 & 34.24 & 29.02 & 30.67 & 0.26 \\ 
        Bonsai & 34.11 & 23.06 & 16.63 & 34.81 & 36.57 & 24.75 & 30.35 & 34.86 & 28.32 & 30.47 & 0.1 \\ 
        XR-Transformer & 38.1 & 25.57 & 18.32 & 38.89 & 40.71 & 28.86 & 34.85 & 39.59 & 32.92 & 35.21 & 35.4 \\
        RocketQA & 42.75 & - & 20.98 & - & 46.86 & 38.84 & - & 48.84 & - & - & - \\
        \bottomrule
    \end{tabular}}
\end{table}

\begin{table}[t]
    \caption{\small Detailed results on long-text datasets. TT refers to training time in hours on a single Nvidia V100 GPU.}
    \label{tab:supp:results_full_text}
      \centering
      \resizebox{\linewidth}{!}{
        \begin{tabular}{@{}l|ccccccccccc@{}}
        \toprule
        \textbf{Method} & \textbf{P@1} & \textbf{P@3} & \textbf{P@5} & \textbf{N@3} & \textbf{N@5} & \textbf{PSP@1} & \textbf{PSP@3} & \textbf{PSP@5} & \textbf{PSN@3} & \textbf{PSN@5} & \textbf{TT} \\ \midrule
        
        \midrule
        \multicolumn{12}{c}{LF-Wikipedia-500K}\\ \midrule
        \alg & \textbf{84.01} & \textbf{64.69} & \textbf{49.97} & \textbf{78.25} & \textbf{75.97} & \textbf{41.25} & \textbf{52.57} & \textbf{57.04} & \textbf{51.58} & \textbf{56.11} & 54.88  \\ 
        SiameseXML & 67.26 & 44.82 & 33.73 & 56.64 & 54.29 & 33.95 & 35.46 & 37.07 & 36.58 & 38.93 & 4.37 \\ 
        ECLARE & 68.04 & 46.44 & 35.74 & 58.15 & 56.37 & 31.02 & 35.39 & 38.29 & 35.66 & 38.72 & 9.4 \\ 
        GalaXC & 55.26 & 35.07 & 26.13 & 45.51 & 43.7 & 31.82 & 31.26 & 32.47 & 32.75 & 34.5 & - \\ 
        DECAF & 73.96 & 54.17 & 42.43 & 66.31 & 64.81 & 32.13 & 40.13 & 44.59 & 39.57 & 43.7 & 44.23 \\ 
        Astec & 73.02 & 52.02 & 40.53 & 64.1 & 62.32 & 30.69 & 36.48 & 40.38 & 36.33 & 39.84 & 20.35 \\ 
        AttentionXML & 82.73 & 63.75 & 50.41 & 76.56 & 74.86 & 34 & 44.32 & 50.15 & 42.99 & 47.69 & 110.6 \\ 
        MACH & 52.78 & 32.39 & 23.75 & 42.05 & 39.7 & 17.65 & 18.06 & 18.66 & 19.18 & 20.45 & - \\ 
        X-Transformer & 76.95 & 58.42 & 46.14 & - & - & - & - & - & - & - &  \\ 
        LightXML & 81.59 & 61.78 & 47.64 & 74.73 & 72.23 & 31.99 & 42 & 46.53 & 40.99 & 45.18 & 185.56 \\ 
        AnneXML & 64.64 & 43.2 & 32.77 & 54.54 & 52.42 & 26.88 & 30.24 & 32.79 & 30.71 & 33.33 & 15.50 \\ 
        DiSMEC & 70.2 & 50.6 & 39.7 & 42.1 & 40.5 & 31.2 & 33.4 & 37 & 33.7 & 37.1 &  - \\ 
        Parabel & 68.7 & 49.57 & 38.64 & 60.51 & 58.62 & 26.88 & 31.96 & 35.26 & 31.73 & 34.61 & 2.72 \\ 
        XT & 64.48 & 45.84 & 35.46 & - & - & - & - & - & - & - & - \\ 
        PfastreXML & 59.5 & 40.2 & 30.7 & 30.1 & 28.7 & 29.2 & 27.6 & 27.7 & 28.7 & 28.3 & 63.59 \\ 
        Bonsai & 69.2 & 49.8 & 38.8 & 60.99 & 59.16 & 27.46 & 32.25 & 35.48 & - & - & -\\ 
        XR-Transformer & 81.62 & 61.38 & 47.85 & 74.46 & 72.43 & 33.58 & 42.97 & 47.81 & 42.21 & 46.61 & 318.9 \\ 
        DPR & 79.91 & 59.51 & 45.9 & 72.69 & 70.58 & 37.57 & 46.51 & 50.7 & 45.96 & 50.16 & 54.67 \\ 
        TAS & 82.23 & 62.7 & 48.36 & 75.9 & 73.5 & 38.43 & 48.38 & 52.83 & 47.59 & 51.9 & 54.68 \\ 
        ANCE & 76.9 & 57.64 & 45.1 & 70.61 & 69.39 & 37.75 & 44.65 & 48.85 & 45.08 & 49.65  & 75.08 \\         
        \midrule
        \multicolumn{12}{c}{LF-WikiSeeAlso-320K}\\ \midrule
        \alg & \textbf{47.65} & \textbf{31.56} & \textbf{23.68} & \textbf{47.5} & \textbf{48.99} & \textbf{33.83} & \textbf{37.79} & \textbf{41.03} & \textbf{38.36} & \textbf{41.01} & 75.39 \\ 
        SiamseXML & 42.16 & 28.14 & 21.39 & 41.79 & 43.36 & 29.02 &	32.68 & 36.03 & 32.64 & 35.17 & 2.33 \\ 
        ECLARE & 40.58 & 26.86 & 20.14 & 40.05 & 41.23 & 26.04 & 30.09 & 33.01 & 30.06 & 32.32 & 9.4 \\ 
        GalaXC & 38.96 & 25.84 & 19.58 & 37.76 & 38.92 & 25.78 & 29.37 & 32.53 & 28.71 & 30.87 & 1.1 \\ 
        DECAF & 41.36 & 28.04 & 21.38 & 41.55 & 43.32 & 25.72 & 30.93 & 34.89 & 30.69 & 33.69 & 13.4 \\ 
        Astec & 40.07 & 26.69 & 20.36 & 39.36 & 40.88 & 23.41 & 28.08 & 31.92 & 27.48 & 30.17 & 6.39 \\ 
        AttentionXML & 40.5 & 26.43 & 19.87 & 39.13 & 40.26 & 22.67 & 26.66 & 29.83 & 26.13 & 28.38 & 90.37 \\ 
        MACH & 27.18 & 17.38 & 12.89 & 26.09 & 26.8 & 13.11 & 15.28 & 16.93 & 15.17 & 16.48 & 50.22 \\ 
        X-Transformer & - & - & - & - & - & - & - & - & - & - & - \\ 
        LightXML & 34.5 & 22.31 & 16.83 & 33.21 & 34.24 & 17.85 & 21.26 & 24.16 & 20.81 & 22.8 & 249 \\ 
        BERTXML & 42.63 & 27.65 & 20.41 & 41.8 & 42.88 & 26.16 & 31.41 & 34.63 & 31.2 & 33.8 & 116.67 \\
        AnneXML & 30.79 & 20.88 & 16.47 & 30.02 & 31.64 & 13.48 & 17.92 & 22.21 & 16.52 & 19.08 & 2.4 \\ 
        DiSMEC & 34.59 & 23.58 & 18.26 & 34.43 & 36.11 & 18.95 & 23.92 & 27.9 & 23.04 & 25.76 & 58.79 \\ 
        Parabel & 33.46 & 22.03 & 16.61 & 32.4 & 33.34 & 17.1 & 20.73 & 23.53 & 20.02 & 21.88 & 0.33 \\ 
        XT & 30.1 & 19.6 & 14.92 & 28.65 & 29.58 & 14.43 & 17.13 & 19.69 & 16.37 & 17.97 & 3.27 \\ 
        Slice & 27.74 & 19.39 & 15.47 & 27.84 & 29.65 & 13.07 & 17.5 & 21.55 & 16.36 & 18.9 & 0.2 \\ 
        PfastreXML & 28.79 & 18.38 & 13.6 & 27.69 & 28.28 & 17.12 & 18.19 & 19.43 & 18.23 & 19.2 & 4.97 \\ 
        Bonsai & 34.86 & 23.21 & 17.66 & 34.09 & 35.32 & 18.19 & 22.35 & 25.66 & 21.62 & 23.84 & 1.39 \\ 
        XR-Transformer & 42.57 & 28.24 & 21.3 & 41.99 & 43.44 & 25.18 & 30.13 & 33.79 & 29.84 & 32.59 & 119.47 \\
        \midrule
        \multicolumn{12}{c}{LF-Amazon-131K}\\ \midrule
        \alg & \textbf{46.53} & \textbf{30.89} & \textbf{22.02} & \textbf{47.44} & 49.58 & \textbf{38.53} & \textbf{44.95} & \textbf{50.45} & \textbf{43.07} & \textbf{45.81} & 39.99\\
        SiameseXML & 44.81 & 30.19 & 21.94 & 46.15 & 48.76 & 37.56 &	43.69 & 49.75 & 41.91 & 44.97 & 1.18 \\ 
        ECLARE & 43.56 & 29.65 & 21.57 & 45.24 & 47.82 & 34.98 & 42.38 & 48.53 & 40.3 & 43.37 & 2.15 \\ 
        GalaXC & 41.46 & 28.04 & 20.25 & 43.08 & 45.32 & 35.1 & 41.18 & 46.38 & 39.55 & 42.13 & 0.45 \\ 
        DECAF & 42.94 & 28.79 & 21 & 44.25 & 46.84 & 34.52 & 41.14 & 47.33 & 39.35 & 42.48 & 1.8 \\ 
        Astec & 42.22 & 28.62 & 20.85 & 43.57 & 46.06 & 32.95 & 39.42 & 45.3 & 37.45 & 40.35 & 3.05 \\ 
        AttentionXML & 42.9 & 28.96 & 20.97 & 44.07 & 46.44 & 32.92 & 39.51 & 45.24 & 37.49 & 40.33 & 50.17 \\ 
        MACH & 34.52 & 23.39 & 17 & 35.53 & 37.51 & 25.27 & 30.71 & 35.42 & 29.02 & 31.33 & 13.91 \\ 
        X-Transformer & - & - & - & - & - & - & - & - & - & - & - \\ 
        LightXML & 41.49 & 28.32 & 20.75 & 42.7 & 45.23 & 30.27 & 37.71 & 44.1 & 35.2 & 38.28 & 56.03 \\ 
        BERTXML & 42.59	& 28.39 & 20.27	& 43.57 & 45.61 & 33.55 & 40.83 & 46.4 & 38.8 & 41.61 & 48.11 \\
        AnneXML & 35.73 & 25.46 & 19.41 & 37.81 & 41.08 & 23.56 & 31.97 & 39.95 & 29.07 & 33 & 0.68 \\ 
        DiSMEC & 41.68 & 28.32 & 20.58 & 43.22 & 45.69 & 31.61 & 38.96 & 45.07 & 36.97 & 40.05 & 7.12 \\ 
        Parabel & 39.57 & 26.64 & 19.26 & 40.48 & 42.61 & 28.99 & 35.36 & 40.69 & 33.36 & 35.97 & 0.1 \\ 
        XT & 34.31 & 23.27 & 16.99 & 35.18 & 37.26 & 24.35 & 29.81 & 34.7 & 27.95 & 30.34 & 1.38 \\ 
        Slice & 32.07 & 22.21 & 16.52 & 33.54 & 35.98 & 23.14 & 29.08 & 34.63 & 27.25 & 30.06 & 0.11 \\ 
        PfastreXML & 35.83 & 24.35 & 17.6 & 36.97 & 38.85 & 28.99 & 33.24 & 37.4 & 31.65 & 33.62 & 1.54 \\ 
        Bonsai & 40.23 & 27.29 & 19.87 & 41.46 & 43.84 & 29.6 & 36.52 & 42.39 & 34.43 & 37.34 & 0.4 \\ 
        XR-Transformer & 45.61 & 30.85 & 22.32 & 47.1 & 49.65 & 34.93 & 42.83 & 49.24 & 40.67 & 43.91 & 38.4 \\ 
        \bottomrule
    \end{tabular}}
\end{table}

Tables~\ref{tab:supp:results_short_text} and \ref{tab:supp:results_full_text} present detailed results on all baseline methods for short and full-text datasets respectively. Table~\ref{tab:ablation_speedup} considers multiple negative sampling algorithms including DPR \cite{Karpukhin20}, ANCE \cite{Zhang21} and \alg and shows the sampling overhead incurred by each method. \alg incurs barely 1\% sampling overhead and offers epoch times similar to DPR that incurs no sampling overhead. However, \alg offers much better accuracies than DPR on a range of datasets (see Tables~\ref{tab:results_repo_full_text} and \ref{tab:results_repo_short_text}). On the other hand, ANCE incurs between $59 - 210\%$ sampling overhead slowing down the overall training procedure.

Figure~\ref{fig:convergence} gives convergence plots offered by various negative sampling methods. \alg offers by far the fastest convergence of all methods. Fig~\ref{fig:decile} shows how various deciles contribute to \alg's performance. It is notable that a bulk of \alg's (high) P@5 performance is due to accurate retrieval of tail labels. Figure~\ref{fig:negatives} gives an illustrative example of how \alg recovered hard negatives for an actual data point.

\begin{figure}[t]
      \centering
      \includegraphics[width=0.7\linewidth]{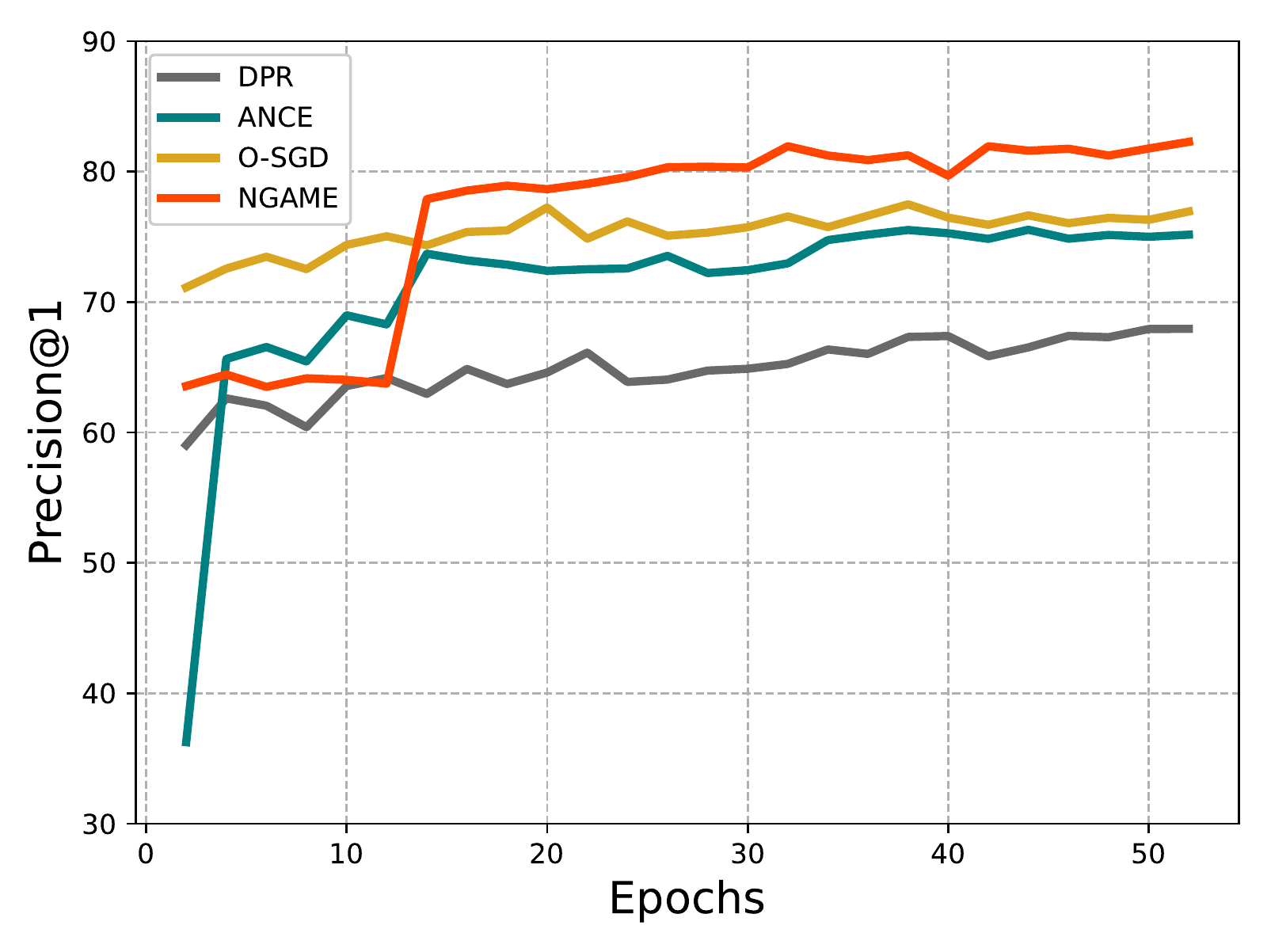}
        \caption{Convergence with different negative mining techniques on the LF-Wikipedia-500K dataset. Note that the plot include P@1 just for M1.}
        \label{fig:convergence}
\end{figure}

\begin{figure}[t]
    \begin{subfigure}{.43\textwidth}
      \centering
      \includegraphics[width=\linewidth]{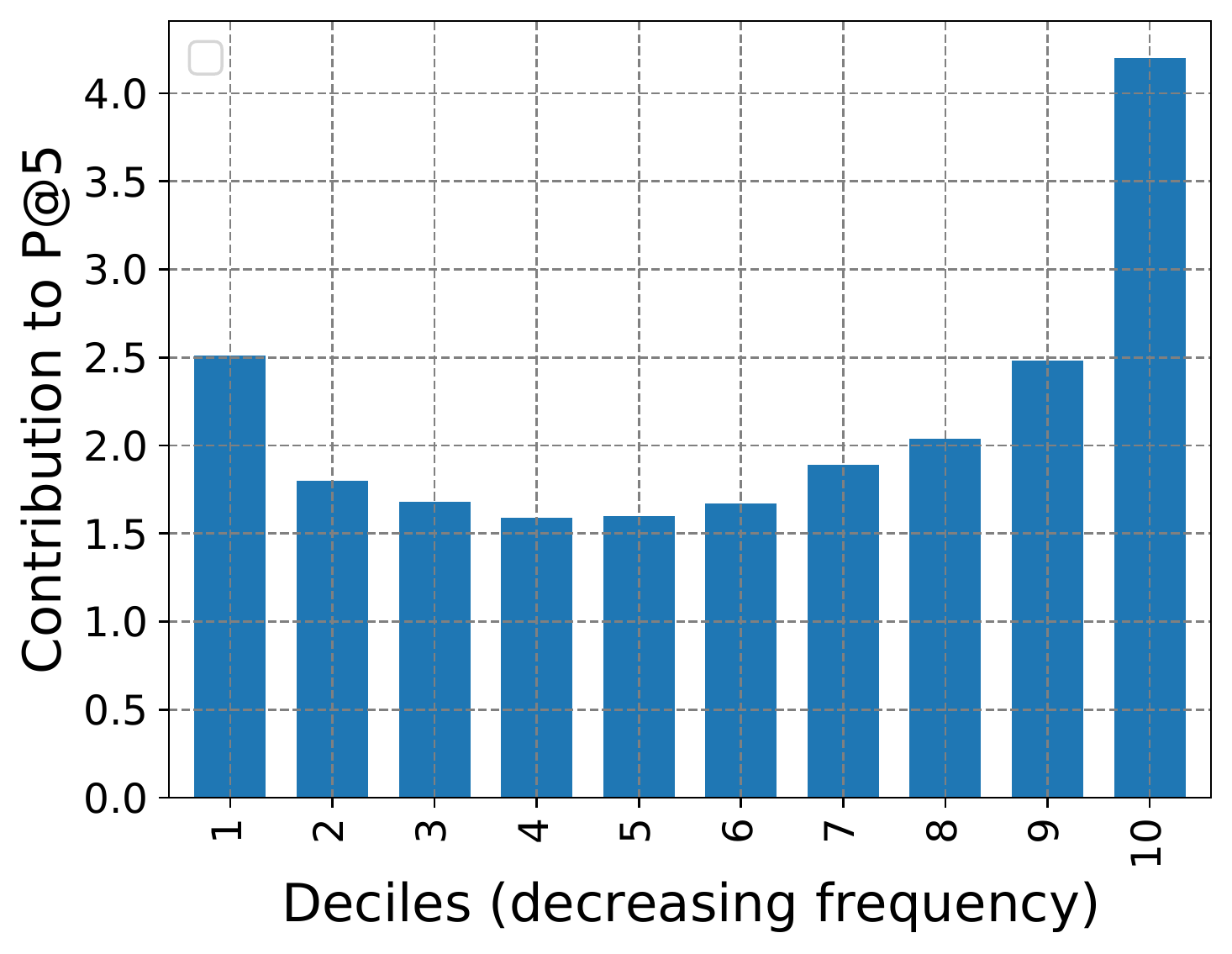}
      \caption{} \label{fig:decile}
    \end{subfigure}
    \begin{subfigure}{.5\textwidth}
      \centering
      \includegraphics[width=\linewidth]{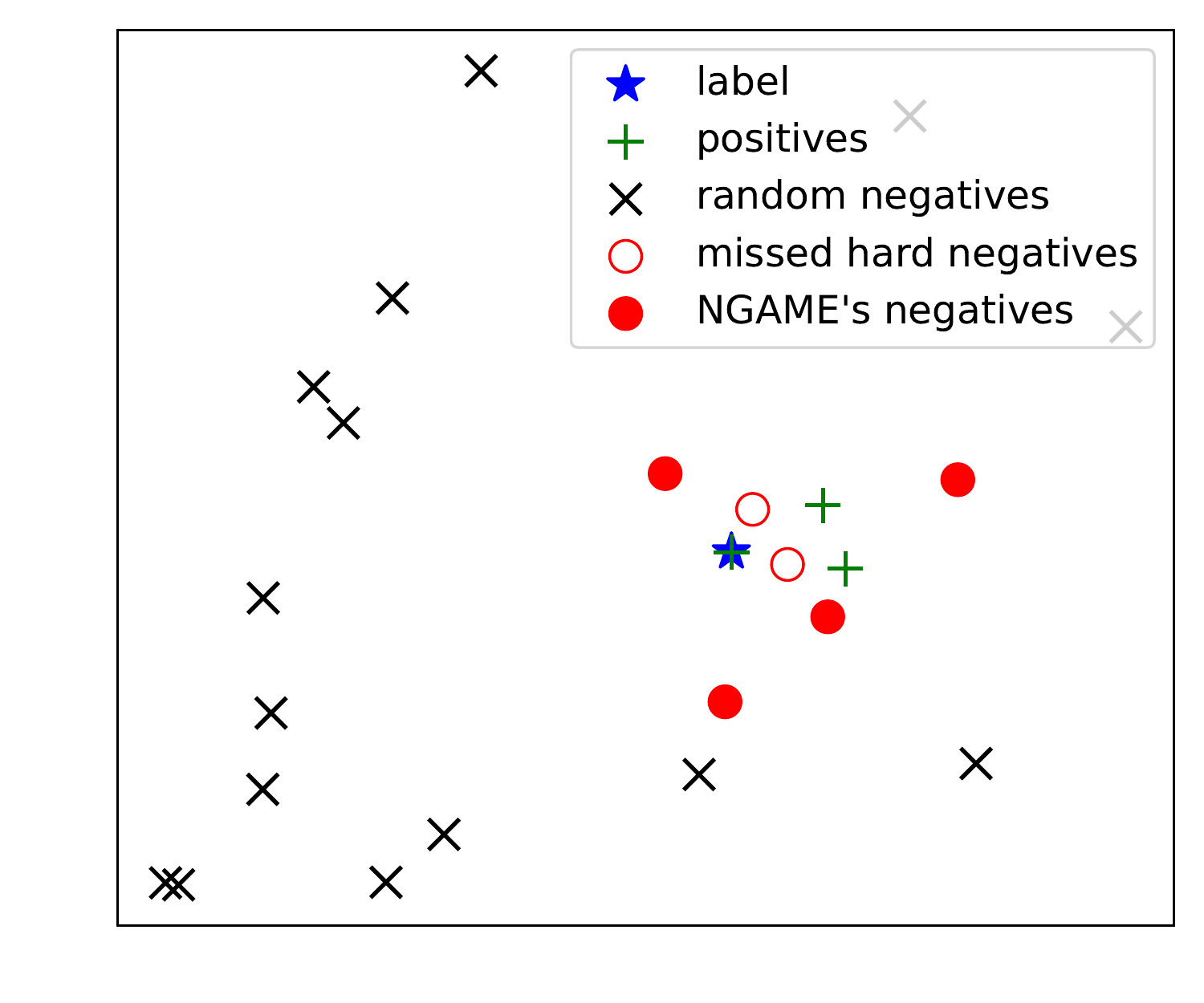}
      \caption{} \label{fig:negatives}
    \end{subfigure}
        \caption{(a) Performance of \alg on different deciles of LF-AmazonTitles-131K. It is notable that the tail deciles (that contain rare labels) contribute significantly to \alg's performance indicating that \alg derives its superior performance not just by predicting popular labels better but predicting rare labels accurately as well; (b) t-SNE~\cite{Van08} representation of positives, random-negatives, \alg's negatives and hard negatives missed by \alg for a product titled \emph{`Fearless Confessions: A Writer's Guide to Memoir'}. Note that \alg recovers most of the hard negatives for the data point.}
    \label{}
\end{figure}

\begin{table}[t]
    \caption{ Ablation study experimenting with prediction performance from different components of \alg. In all rows, training was done using \alg's usual pipeline. However, predictions were made using just label embeddings in the first row, using just the label classifier in the second row, and using score fusion function in the last row. \alg's score fusion strategy offers moderate gains in accuracy as compared to a simple sum of two scores (compare row 3 vs row 4).}
    \label{tab:ablation_fusion}
      \centering
        \begin{tabular}{@{}l|cc|cc@{}}
        \toprule
        \textbf{Prediction} & \textbf{P@1} & \textbf{P@5} & \textbf{P@1} & \textbf{P@5} \\
        \midrule
            & \multicolumn{2}{c|}{LF-AmazonTitles-131K} & \multicolumn{2}{c}{LF-AmazonTitles-1.3M}\\ \midrule
            $\cE_{\vtheta}(\vx)^{\top} \cE_{\vtheta}(\vz_l)$ & 42.61 & 20.69 & 45.82 & 35.48 \\
             $\cE_{\vtheta}(\vx)^{\top}\vw_l$ & 44.95 & 21.20 & 54.69 & 42.8 \\
             $\cF(\cE_{\vtheta}(\vz_l)$,$\vw_l,\cE_{\vtheta}(\vx))$ & \textbf{46.01} & \textbf{21.47} & \textbf{56.75} & \textbf{44.09} \\
            \bottomrule
        \end{tabular}
\end{table}

\section{Ablation Experiments}
\label{app:dummy section}
Table~\ref{tab:ablation_fusion} gives results of ablation experiments that establish that all of \alg's architectural components are essential to obtain optimal accuracy across several metrics and datasets. In particular, making predictions using label/data point embeddings alone is suboptimal. However, so is making predictions using classifiers scores alone. Moreover, combining classifier and embedding scores naively also does not yield optimal accuracy. A combination of embedding and classifier scores using \alg's score fusion strategy is what offers the best performance uniformly across several metrics and datasets.

\section{Implementation Details and Hyper-parameters}
\label{app:implementation}

Balanced hierarchical k-means was used in Algorithm~\ref{algo:iteration} and offered $N/C$ balanced clusters in $D$-dimensions in $\bigO{ND\log(N/C)}$ time.

\begin{table}
	\caption{ Hyper-parameter values for \alg on all datasets to enable reproducibility. \alg code will be released publicly. Most hyperparameters were set to their default values across all datasets. LR is learning rate. Multiple clusters were chosen to form a batch hence $B > C$ in general. Clusters were refreshed after $\tau$ epochs. Cluster size $C$ was doubled after every 25 epochs}
	\label{tab:hyperparams}
	\resizebox{\linewidth}{!}
	{
		\begin{tabular}{l|ccccccc}
			\toprule
			\textbf{Dataset} & 
			\makecell{\textbf{Batch}\\\textbf{Size} $S$} & 
			\makecell{\textbf{Cluster}\\ \textbf{Ref.} $\tau$}  & 
			\makecell{\textbf{Margin}\\$\gamma$} & 
			\makecell{\textbf{M1 epochs}\\ $epochs$} & 
			\makecell{\textbf{M1 LR}\\ $LR_1$} & 
			\makecell{\textbf{M2 LR}\\ $LR_2$} & 
			\makecell{\textbf{\bert seq.}\\\textbf{len} $L_{max}$} \\
			\midrule
			\multicolumn{7}{c}{Short-text benchmark datasets}\\ \midrule
            LF-AmazonTitles-131K & 1600 & 5 & 0.3 & 300 & 0.0001 & 0.001 & 32 \\ 
            LF-AmazonTitles-1.3M & -- do -- & -- do -- & -- do -- & 300 & -- do -- & -- do -- & -- do -- \\ 
            \midrule
			\multicolumn{7}{c}{Full-text benchmark datasets}\\ \midrule
            LF-Amazon-131K & 350 & -- do -- & -- do -- & 300 &-- do -- & -- do -- & 128 \\ 
            LF-WikiSeeAlso-320K & -- do -- & -- do -- & -- do -- & 300 & -- do -- & -- do -- & -- do -- \\ 
            LF-Wikipedia-500K & 256 & -- do -- & -- do -- & 40 &-- do -- & -- do -- & 256  \\
            \bottomrule
    	\end{tabular}
	}
\end{table}

\textbf{Hyper-parameters}: \alg's hyper-parameters include (cf. Algorithm~\ref{algo:iteration}): (1) cluster size $C$, (2) batch size $S$, (3) clustering refresh interval $\tau$ and (4) loss margin $\gamma$~\eqref{eq:loss}. Table~\ref{tab:hyperparams} presents the hyperparameter values for various datasets (most were set to default values). Both modules M1 and M2 used the Adam optimizer and hard negatives from \alg. Hyper-parameters for Adam include learning rate and number of epochs. \alg's encoder $\cE_{\vtheta}$ was initialized with a 6-layered DistilBERT base~\cite{Sanh2019DistilBERTAD, Wolf20} to encode the labels and data points in all experiments. The hyper-parameters of the other algorithms were set as suggested by their authors wherever applicable and by fine-grained validation otherwise.

\textbf{Implementing competing negative sampling algorithms}: Tables~\ref{tab:results_repo_full_text} and \ref{tab:results_repo_short_text} present results on several existing negative sampling algorithms such as DPR~\cite{Karpukhin20}, ANCE~\cite{Zhang21} and TAS~\cite{hofstatter21}. Each of these methods was afforded the same encoder and classifier architecture and loss function as \alg and the only difference in these implementations was that a different negative sampling technique was used in place of \alg.

\textbf{Module M1 (Encoder Training)}: This model simply trains the encoder model $\cE_{\vtheta}$. The residual vectors were fixed to zero in this module, effectively using the label embedding itself as the label classifier i.e. $\vw_l = \cE_{\vtheta}(\vz_l)$. For a data point $i \in [n]$, let $\cP^i_+$ denote its positive labels and $\cP^i_-$ denote the hard negatives offered by \alg. Algorithm~\ref{algo:iteration} was used to train the task loss as follows:\\
$\displaystyle
 \min_{\vtheta} \cL(\vtheta) = \sum_{i = 1}^N\sum_{l \in \cP^i_+}\sum_{k \in \cP^i_-} [\cE_{\vtheta}(\vx_i)^{\top}\cE_{\vtheta}(\vz_l) - \cE_{\vtheta}(\vx_i)^{\top}\cE_{\vtheta}(\vz_k) + \gamma]_+$\\
Note that only the hard negative labels of a data point offered by \alg and its positive labels are used for training. To accelerate training further, only one positive label was randomly sampled for each data point rather than using all positives, {\it i.e.,} $\abs{\cP^i_+} = 1$ was used -- note that this continued to offer an unbiased loss estimate.

\textbf{Module M2 (Classifier Training)}: Label classifiers were initialized to their embeddings, {\it i.e.,} $\vw_l \leftarrow \cE_{\hat\vtheta}(\vz_l)$ where $\hat\vtheta$ is the encoder model $\cE_{\vtheta}$ learnt by M1 (which is now frozen). Note that this is equivalent to initializing the residual vectors to zero. The original loss \eqref{eq:loss} was now minimized with respect to the residual vectors alone, effectively learning better classifiers than what the label embeddings alone could have provided. Say for a training data point $i \in [N]$, let $\cP^i_+$ denote its positive labels and $\cP^i_-$ its \alg hard negatives. The loss function used for module M2 training was:\\
$\displaystyle
\min_{\vW} \cL(\vW) = 
 \sum_{i = 1}^N\sum_{l \in \cP^i_+}\sum_{k \in \cP^i_-} [\vw_k^\top\cE_{\vtheta}(\vx_i) - \vw_l^\top\cE_{\vtheta}(\vx_i) + \gamma]_+$\\
In principle, one could use standard Binary Cross Entropy or Squared Hinge loss to train the classifiers. However, using the same objective in both modules yielded superior accuracies as compared to the alternatives. Additionally, the unit normalized classifiers offered by triplet loss were found to be more suitable for ANNS retrieval.

Note that since embeddings are frozen, M2 training can be distributed across multiple GPUs without any inter-GPU communication by splitting the label set into partitions and learning classifiers independently for each split.

\textbf{Inference (Score Fusion)}: Fusing embedding and classifier scores is commonly practiced~\cite{Dahiya21,Jain16}. However, unlike previous works that simply take a fixed linear combination of scores which is sub-optimal, \alg uses a fusion architecture $\cF$ that still offers $\bigO{b + D\log L}$ time inference. Once M1 and M2 training is over, an MIPS~\cite{MalkovY16} data structure is created over the label classifiers $\bc{\vw_l, l \in [L]}$. A small validation set $V$ of around 10K data points is created and for each data point $i \in V$, its embedding $\cE_{\vtheta}(\vx_i)$ is used to obtain a label shortlist $\hat\cN_i \subset [L]$ of $\bigO{\log L}$ labels from the MIPS structure (note that this shortlist may contain both relevant and irrelevant labels -- if encoder training has been accurate then this set will have high recall i.e. contain most relevant labels for this data point). Also, the set of labels relevant to this data point $\cP^i_+ = \bc{l \in [L]: y_{il} = +1}$ is obtained. For each data point-label pair $(i,l)$ with $i \in V, l \in \hat\cN_i \cup \cP^i_+$, a 3-dimensional descriptor $\vd_{il}$ is created as $\vd_{il} = (\cE_{\vtheta}(\vz_l)^\top\cE_{\vtheta}(\vx_i), \vw_l^\top\cE_{\vtheta}(\vx_i), f_l)$, where $f_l$ is the frequency of the label $l$ in the training set. The pair $(i,l)$ is assigned a tag $t_{il} = 1$ if the label $l \in \cP^i_+$ and $t_{il} = 0$ otherwise. A standard regression tree with depth 7 is learnt to predict $t_{il}$ using $\vd_{il}$. Let $\cT(\vd_{il})$ denote the score returned by this regressor. At test time, given a test data point $\vx$, its embedding $\cE_{\vtheta}(\vx)$ is computed and used to obtain a shortlist $\hat\cN$ of $\bigO{\log L}$ labels from the MIPS structure. To each shortlisted label $l \in \hat\cN$, a fused score is assigned as 
\begin{align*}
    \cF(\cE_{\vtheta}(\vx), \vw_l, \cE_{\vtheta}(\vz_l)) &= \cT(\cE_{\vtheta}(\vz_l)^\top\cE_{\vtheta}(\vx), \vw_l^\top\cE_{\vtheta}(\vx), f_l) \\
    &\qquad +\ \cE_{\vtheta}(\vz_l)^\top\cE_{\vtheta}(\vx) + \vw_l^\top\cE_{\vtheta}(\vx)
\end{align*}
Labels are then recommended in decreasing order of their fused scores. See Table~\ref{tab:ablation_fusion} for full results.

\section{Training and Inference Time Complexity for \alg}
\label{app:complexity}
\textbf{M1 Complexity}: We first calculate the epoch complexity of M1. In any mini-batch of say $S$ data points, computing embeddings of all $S$ data points and one positive label per data point takes $\bigO{bS}$ time if using an encoder model $\cE_{\vtheta}$ with $b$ parameters. Selecting hard negatives for all data points takes $\bigO{S^2}$ time. Executing backprop takes at most $\bigO{bS^2}$ time giving us an iteration complexity of $\bigO{bS^2}$. For an epoch of $N/S$ iterations, this takes $\bigO{NbS}$ time. \alg chooses $S = \bigO{\log L}$ giving us a $\bigO{Nb\log L}$ epoch time. We now compute the cost of refreshing the clustering from Algorithm~\ref{algo:iteration}. Computing $D$-dimensional embeddings for every data point takes $\bigO{ND + Nb}$ time. Clustering them takes $\bigO{ND\log N}$ time as mentioned above in Appendix~\ref{app:implementation}. Thus, the total time is $\bigO{Nb + ND\log N} = \bigO{Nb\log L}$ since $D \ll b$ and $L = \Theta(N)$. This establishes a $\bigO{Nb\log L}$ time complexity for M1 training.

\textbf{M2 Complexity}: The time complexity calculation is identical here except that executing backprop takes at most $\bigO{DS^2}$ time since only the classifiers (more specifically the residual vectors) are being optimized in M2. However, since $D \ll b$, we get a $\bigO{Nb\log L}$ time complexity for M2 training as well.

\textbf{Inference}: Given a test data point, obtaining its embedding takes $\bigO{b + D} = \bigO{b}$ time since $D \ll b$. Retrieving the $\log L$ top-ranked labels for this embedding from an ANNS/MIPS data structure takes $\bigO{D\log L}$ time \cite{MalkovY16}. Obtaining the Siamese and classifier scores for these shortlisted labels takes an additional $\bigO{D\log L}$ time bringing the total inference complexity to $\bigO{b + D\log L}$.

\section{Metrics for Live A/B Testing}
\label{app:eval}

Standard online metrics were used for A/B testing experiments. Impression Yield (IY) is defined as the rate at which an ad appears compared to the total number of searches. Similarly, Click Yield (CY) is defined as the rate at which an ad is clicked with respect to the total number of searches. Query Coverage (QC) is defined as the fraction of search queries for which at least one ad was shown. Formally, if we denote total number of searches as $SRPV$ :
    $$IY = \frac{\textit{Total No. of Ads}}{SRPV}$$
    $$CY = \frac{\textit{Total No. of Clicks}}{SRPV}$$
    $$QC = \frac{\textit{Total No. of Search Queries with atleast 1 Ad Shown}}{SRPV}$$

\clearpage

\clearpage

\section{Theoretical Analysis and Full Proofs}
\label{supp:theory}

\allowdisplaybreaks

This section details the theoretical analysis of the \alg method and also offers first-order stationarity guarantees under some standard, simplifying assumptions.

\subsection{Proof of Theorem~\ref{thm:main}: \alg offers Provably Accurate Hard-negatives}

Intuitively, an embedding is good if for most data points, its positive labels are embedded close to it and, a clustering is good if it does not split too many closely placed data point embedding vectors into distinct clusters.

To state and prove Theorem~\ref{thm:main}, we introduce the following notions of \emph{bad} events $E^r_{il}, F^r_{lm}, G^r_{il}$.
\begin{enumerate}
	\item For any data point $i \in [N]$, label $l \in [L]$, let $E^r_{il} := \bc{y_{il} = -1} \wedge \bc{\norm{\cE_{\vtheta}(\vx_i) - \cE_{\vtheta}(\vz_l)}_2 \leq r} \wedge \bc{l \notin \hat\cP^i_-}$ be the \textit{bad} event where label $l$ is an $r$-hard negative for data point $i$ but \alg fails to retrieve it in its shortlist $\hat\cP^i_-$.
	\item For any two data points $i, j \in [N]$, let $F^r_{ij} := \bc{\norm{\cE_{\vtheta}(\vx_i) - \cE_{\vtheta}(\vx_j)}_2 \leq r} \wedge \bc{c(i) \neq c(j)}$ be the bad even where the embeddings for data points $i,j$ are $r$-close to each other but clustering separates them into distinct clusters.
	\item For any data point $i \in [N]$ and any of its positive/relevant labels $l \in [L]$ i.e. where $y_{il} = +1$, let $G^r_{il} := \bc{\norm{\cE_{\vtheta}(\vz_l) - \cE_{\vtheta}(\vx_i)}_2 \geq r}$ be the bad event where label $l$ is relevant to data point $i$ but the encoder embeddings for the pair are more than $r$ distance far apart.
\end{enumerate}

We introduce some handy shortcuts to state the result.
\begin{itemize}
    \item For any data point $i \in [n]$, let $p_i \deff \abs{\bc{l \in [L]: y_{il} = +1}}$ be the number of relevant/positive labels for that data point.
    \item For any label $l \in [L]$, let $q_l \deff \abs{\bc{i \in [N]: y_{il} = +1}}$ be the number of data points for which this label is relevant/positive.
    \item Let $\bar p \deff \E{p_i}$ and $\bar q \deff \E{q_l}$ denote average/expected values of the quantities $p_i, q_l$ respectively.
    \item Let $p_{\min} = \min_{i \in [N]}\ p_i$ denote the smallest number of relevant labels for any data point.
    \item Let $q_{\min} = \min_{l \in [L]}\ q_l$ denote the smallest number of relevant data points for any label.

    \item Let $\mu_1 \deff \E{\frac{N-q_l}{q_l}}, \sigma_1^2 \deff \Var{\frac{N-q_l}{q_l}}, \sigma_2^2 \deff \Var{p_i}$ denote various handy second order moments involving the quantities $p_i, q_l$.
\end{itemize}

Given this we are ready to prove Theorem~\ref{thm:main} that establishes the negative mining guarantee for \alg. However, first we provide a full version of the theorem statement with all the details.

\begin{theorem}[Theorem~\ref{thm:main} restated with full constants.]
\label{thm:main-full}
Suppose Algorithm~\ref{algo:iteration} performs its clustering step using embeddings obtained using the encoder model parameterized by $\vtheta$ and identifies a set of hard negative labels $\hat\cP^i_-$ for data point $i$ in step 8 of the algorithm using the threshold $r > 0$. As defined above, for any label $l \in [L]$ and data point $i \in [N]$, let $E^r_{il} := \bc{y_{il} = -1} \wedge \bc{\norm{\cE_{\vtheta}(\vx_i) - \cE_{\vtheta}(\vz_l)}_2 \leq r} \wedge \bc{l \notin \hat\cQ^l_-}$ be the \textit{bad} event where label $l$ is an $r$-hard negative for data point $i$ but \alg fails to retrieve it in its shortlist $\hat\cP^i_-$. Then if the model $\vtheta$ was $(r,\epsilon_1)$-good and the clustering was $(2r,\epsilon_2)$-good, we are assured that
\[
\frac1{NL}\sum_{i \in [N], l \in [L]}\ind{E^r_{il}} \leq c_1\cdot\epsilon_1 + c_2\cdot\epsilon_2,
\]
where $c_1 = \frac{\bar q(\mu_1 + \sigma_1\sqrt L)}N$ and $c_2 = \frac{(\bar p + \sigma_2\sqrt N)N}{q_{\min}L}$.
\end{theorem}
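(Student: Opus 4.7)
The plan is to dominate the bad event $E^r_{il}$ by a sum of bad-embedding and bad-clustering indicators attached to the positive data points of label $l$, and then aggregate via two careful Cauchy--Schwarz inequalities tuned to produce the specific constants $\mu_1+\sigma_1\sqrt L$ and $\bar p + \sigma_2\sqrt N$. The first step is a per-pair inequality. Fix $(i,l)$ with $E^r_{il}$; this forces $y_{il}=-1$, $\norm{\cE_{\vtheta}(\vx_i)-\cE_{\vtheta}(\vz_l)}\leq r$, and $l \notin \hat\cP^i_-$. The last condition means that for every positive data point $j$ of label $l$ (call this set $N_l$, with $|N_l|=q_l$), $j$ fails to land in $i$'s cluster, i.e., $c(j)\neq c(i)$. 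For each such $j$ I case-split on its embedding: either $\norm{\cE_{\vtheta}(\vx_j)-\cE_{\vtheta}(\vz_l)} > r$, which is exactly the bad-embedding event $G^r_{jl}$; or the distance is at most $r$, in which case the triangle inequality yields $\norm{\cE_{\vtheta}(\vx_i)-\cE_{\vtheta}(\vx_j)} \leq 2r$, which combined with $c(i)\neq c(j)$ is precisely $F^{2r}_{ij}$ (this is why the theorem assumes $(2r,\epsilon_2)$-goodness of clustering, not $(r,\epsilon_2)$). Summing over $j\in N_l$ gives
\[
q_l\cdot\ind{E^r_{il}} \leq \sum_{j\in N_l}\bigl[\ind{G^r_{jl}} + \ind{F^{2r}_{ij}}\bigr].
\]

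Next I would divide by $q_l$, sum over $i$ (noting $\ind{E^r_{il}}$ vanishes unless $y_{il}=-1$, which contributes a factor $N-q_l$ to the embedding piece), and then sum over $l$ to reach
\[
\sum_{i,l}\ind{E^r_{il}} \leq \sum_l \tfrac{N-q_l}{q_l}\,b_l + \sum_{i,j} P_j\,\ind{F^{2r}_{ij}},
\]
where $b_l \deff \sum_j\ind{G^r_{jl}}$ and $P_j \deff \sum_{l:\,y_{jl}=+1} 1/q_l \leq p_j/q_{\min}$. The $(r,\epsilon_1)$-goodness hypothesis supplies $\sum_l b_l \leq M\epsilon_1 = L\bar q\,\epsilon_1$, and the $(2r,\epsilon_2)$-goodness supplies $\sum_{i,j}\ind{F^{2r}_{ij}} \leq N^2\epsilon_2$. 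For the embedding term, decompose $w_l \deff (N-q_l)/q_l = \mu_1 + \Delta_l$ so that $\sum_l \Delta_l^2 = L\sigma_1^2$; then Cauchy--Schwarz on $\sum_l \Delta_l b_l$ combined with the elementary bound $\sqrt{\sum_l b_l^2}\leq \sum_l b_l$ (valid for non-negative sequences) yields $\sum_l w_l b_l \leq (\mu_1+\sigma_1\sqrt L)\,L\bar q\,\epsilon_1$, which after dividing by $NL$ becomes exactly $c_1\epsilon_1$. An identical decomposition $p_j = \bar p + \delta_j$ handles the clustering term and produces $\sum_{i,j} p_j \ind{F^{2r}_{ij}} \leq (\bar p+\sigma_2\sqrt N)\,N^2\epsilon_2$; inserting the $p_j/q_{\min}$ bound on $P_j$ gives the $1/q_{\min}$ factor in $c_2$.

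The main delicate step is the Cauchy--Schwarz argument above: a naive application would contaminate the bound with $\sqrt{\epsilon_1}$ or $\sqrt{\epsilon_2}$ and fail. The trick that preserves linear dependence on $\epsilon_1,\epsilon_2$ is to subtract the mean from the weight sequence and then apply $\sqrt{\sum x^2}\leq \sum x$ to the non-negative indicator sums, which is precisely what introduces the standard-deviation corrections $\sigma_1\sqrt L$ and $\sigma_2\sqrt N$ while leaving $\epsilon_1,\epsilon_2$ to the first power. In the balanced special case of Corollary~\ref{cor:label-informal}, $\sigma_1=\sigma_2=0$, $\mu_1 = (N-\bar q)/\bar q$, and $q_{\min}=\bar q$, so a direct substitution confirms $c_1,c_2 \leq 1$, matching the informal statement.
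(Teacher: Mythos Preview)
Your proposal is correct and follows essentially the same route as the paper: the per-pair domination $\ind{E^r_{il}} \leq \frac{1}{q_l}\sum_{j\in N_l}(\ind{G^r_{jl}}+\ind{F^{2r}_{ij}})$ via triangle inequality, the split into an embedding term carrying the weights $(N-q_l)/q_l$ and a clustering term carrying $p_j/q_{\min}$, and the mean-centered Cauchy--Schwarz combined with $\|\cdot\|_2\leq\|\cdot\|_1$ on the nonnegative indicator sums to extract exactly $\mu_1+\sigma_1\sqrt L$ and $\bar p+\sigma_2\sqrt N$. The paper's bookkeeping differs only cosmetically (it bounds $1/q_l\leq 1/q_{\min}$ before rearranging rather than introducing your $P_j$, and phrases the clustering $\ell_2$--$\ell_1$ step as a third-moment inequality on $\sum_{i,j,k}F_{ij}F_{ik}$), but the underlying inequalities are identical.
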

As training proceeds, the quantity $\epsilon_1$ is expected to get smaller and smaller as the encoder model gets fine-tuned to the task. Although the above result is presented with reference to module M1, a similar result can be shown to hold true for module M2 as well with just one change -- the label representation used is $\vw_l = \cE_{\vtheta}(\vz_l) + \veta_l$ instead of $\cE_{\vtheta}(\vz_l)$. Also, note that the constants $c_1, c_2$ in the statement of Theorem~\ref{thm:main-full} depend purely on dataset characteristics and are entirely independent of the execution of the algorithm. The following corollary helps appreciate the result better by showing that these constants are small and in fact, upper bounded by unity for a simplified case.
\begin{corollary}[Corollary~\ref{cor:label-informal} restated with full constants.]
\label{cor:label}
For the special case when all data points have the same number of relevant labels i.e. $p_i = p$ for all all $i \in [N]$ as well as all labels are relevant to the same number of data points i.e. $q_l = q$ for all $l \in [L]$, we have $c_1, c_2 \leq 1$ which gives us
\[
\frac1{NL}\sum_{i \in [N], l \in [L]}\ind{E^r_{il}} \leq \epsilon_1 + \epsilon_2
\]
\end{corollary}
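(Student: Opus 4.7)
The plan is to apply Theorem~\ref{thm:main-full} directly and then verify that the two data-dependent constants $c_1 = \bar q(\mu_1+\sigma_1\sqrt L)/N$ and $c_2 = (\bar p+\sigma_2\sqrt N)N/(q_{\min}L)$ each collapse to at most unity under the symmetry hypotheses. First I would evaluate the relevant moment quantities: with $p_i\equiv p$ the variance $\sigma_2^2 = \Var_i[p_i]$ vanishes and $\bar p = p$, while with $q_l\equiv q$ the variance $\sigma_1^2 = \Var_l[(N-q_l)/q_l]$ vanishes, and we get $\bar q = q$, $q_{\min}= q$, and $\mu_1 = (N-q)/q$.

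Next I would invoke the elementary double-counting identity for positive data-point--label pairs,
\[
Np = \sum_{i\in[N]} p_i = \sum_{l\in[L]} q_l = Lq,
\]
which supplies the only nontrivial relation between $p, q, N, L$. Substituting the collapsed moments into the constants then yields $c_1 = q\cdot((N-q)/q)/N = (N-q)/N \le 1$ and $c_2 = pN/(qL) = (Lq)/(qL) = 1$, so both are at most unity. Theorem~\ref{thm:main-full} then gives
\[
\frac1{NL}\sum_{i\in[N],\,l\in[L]}\ind{E^r_{il}} \le c_1\epsilon_1 + c_2\epsilon_2 \le \epsilon_1 + \epsilon_2,
\]
which is precisely the claim of the corollary.

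There is no real obstacle to overcome: the corollary is a routine specialisation of Theorem~\ref{thm:main-full} and all substantive work has already been done in that theorem. The only point worth flagging is that the apparent numerator/denominator mismatch in $c_2$ (an $N$ upstairs against a $q_{\min}L$ downstairs) is resolved entirely by the pair-counting identity $Np = Lq$; without this identity one would not immediately see that $c_2 \le 1$, and in fact it is the tightness of this counting (every positive pair contributes once on each side) that is responsible for the constants collapsing to exactly unity in the symmetric case rather than to some small multiple of it.
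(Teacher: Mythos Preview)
Your proposal is correct and follows essentially the same approach as the paper: both specialise the constants from Theorem~\ref{thm:main-full} by noting that $\sigma_1=\sigma_2=0$ under the symmetry hypotheses, compute $c_1=(N-q)/N\le1$ and $c_2=pN/(qL)=1$, and invoke the double-counting identity $Np=Lq$ to close the second bound. The only cosmetic difference is that the paper phrases the calculation in terms of the intermediate quantities $(A)$ and $(B)$ from the proof of Theorem~\ref{thm:main-full}, whereas you work directly with the stated constants $c_1,c_2$.
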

The proof of Theorem~\ref{thm:main-full} is given below followed by the proof of Corollary~\ref{cor:label}.

\begin{proof}[Proof (of Theorem~\ref{thm:main-full})]
Consider any label $l \in [L]$ and a data point $i \in [N]$ such that $y_{il} = -1$ i.e. label $l$ is not relevant to the data point $i$. Furthermore, let $j \in [N]$ be any other data point for which label $l$ is indeed relevant i.e. $y_{jl} = 1$. Then triangle inequality dictates that
\[
\norm{\cE_{\vtheta}(\vx_i) - \cE_{\vtheta}(\vz_l)}_2 \geq \norm{\cE_{\vtheta}(\vx_i) - \cE_{\vtheta}(\vx_j)}_2 - \norm{\cE_{\vtheta}(\vx_j) - \cE_{\vtheta}(\vz_l)}_2
\]
Now note that if $\norm{\cE_{\vtheta}(\vx_i) - \cE_{\vtheta}(\vx_j)}_2 > 2r$ (i.e. the two data point embeddings are far, which means the event $F^{2r}_{ij}$ does not occur) as well as $\norm{\cE_{\vtheta}(\vx_j) - \cE_{\vtheta}(\vz_l)}_2 < r$ (i.e. the label $l$ is embedded close to data point $j$, which means the event $G^r_{jl}$ does not occur), then the above inequality tells us that $\norm{\cE_{\vtheta}(\vx_i) - \cE_{\vtheta}(\vz_l)}_2 > r$ (i.e. the label $l$ is not a hard negative for data point $i$ rather an \emph{easy} one, which means the event $E^r_{il}$ cannot happen). Taking the contrapositive lets us conclude that
\[
E^r_{il} \Rightarrow F^{2r}_{ij} \vee G^r_{jl},
\]
or in other words,
\[
\ind{E^r_{il}} \leq \ind{F^{2r}_{ij}} + \ind{G^r_{jl}}
\]
Now note that the above must hold true for all data points $j$ such that $y_{jl} = 1$. In particular, if the event $E^r_{il}$ does occur i.e. Algorithm~\ref{algo:iteration} does miss the hard negative label $l$ for data point $i$, then by design of Algorithm~\ref{algo:iteration}, this can only happen if all data points for which label $l$ is relevant lie in clusters other than the cluster of data point $i$ itself since otherwise the label $l$ would have been discovered in step 8 of the algorithm. An averaging argument then lets us conclude that
\[
\ind{E^r_{il}} \leq \frac1{q_l}\sum_{j: y_{jl}=1} (\ind{F^{2r}_{ij}} + \ind{G^r_{jl}}),
\]
where we recall that $q_l$ is the number of data points for which label $l$ is relevant i.e. $q_l = \abs{\bc{j: y_{jl} = 1}}$. Summing over all data points $i \in [N]$ for which label $l$ could have been a missed hard negative gives us
\[
\sum_{i \in [N]}\ind{E^r_{il}} = \sum_{i: y_{il} \neq 1}\ind{E^r_{il}} \leq \frac1{q_l}\sum_{i: y_{il} \neq 1}\sum_{j: y_{jl}=1} \ind{F^{2r}_{ij}} + \frac{N-q_l}{q_l}\sum_{j: y_{jl}=1}\ind{G^r_{jl}}
\]
Further, summing over all data points $i$ and applying normalization gives us
\[
\frac1{NL}\sum_{i \in [N], l \in [L]}\ind{E^r_{il}} \leq \underbrace{\frac1{NL}\sum_{l \in [L]}\frac1{q_l}\sum_{i: y_{il} \neq 1}\sum_{j: y_{jl}=1} \ind{F^{2r}_{ij}}}_{(A)} + \underbrace{\frac1{NL}\sum_{l \in [L]}\frac{N-q_l}{q_l}\sum_{j: y_{jl}=1}\ind{G^r_{jl}}}_{(B)}
\]
Below we bound these two terms separately. To bound $(B)$, we recall the terms $\mu_1, \sigma_1^2, \bar p, \bar q$ defined earlier. Then we have
\begin{align*}
	(B) &= \mu_1\cdot\frac1{NL}\sum_{l \in [L]}\sum_{j: y_{jl}=1}\ind{G^r_{jl}} + \frac1{NL}\sum_{l \in [L]}\br{\frac{N-q_l}{q_l} - \mu_1}\sum_{j: y_{jl}=1}\ind{G^r_{jl}}\\
	&= \frac{\bar q\mu_1}{N}\cdot\epsilon_1 + \frac1{NL}\sum_{l \in [L]}\br{\frac{N-q_l}{q_l} - \mu_1}\sum_{j: y_{jl}=1}\ind{G^r_{jl}}\\
	&\leq \frac{\bar q\mu_1}{N}\cdot\epsilon_1 + \frac1{NL}\sqrt{\sum_{l \in [L]}\br{\frac{N-q_l}{q_l} - \mu_1}^2}\sqrt{\sum_{l \in [L]}\br{\sum_{j: y_{jl}=1}\ind{G^r_{jl}}}^2}\\
	&\leq \frac{\bar q\mu_1}N\cdot\epsilon_1 + \frac{\sigma_1\sqrt L}{NL}\sum_{l \in [L]}\sum_{j: y_{jl}=1}\ind{G^r_{jl}}\\
	&= \frac{\bar q(\mu_1 + \sigma_1\sqrt L)}N\cdot\epsilon_1,
\end{align*}
where in the second step, we used the fact that $\epsilon_1 = \frac1{\bar qL}\sum_{l \in [L]}\sum_{j: y_{jl}=1}\ind{G^r_{jl}}$ and $\bar qL = \bar pN$, in the third step we use the Cauchy-Schwartz inequality and in the fourth step we use the definition of $\sigma_1$, the fact that for any vector, its $L_2$ norm is always upper bounded by its $L_1$ norm, as well as the fact that the terms $G^r_{jl}$ are non-negative so that $\sum_{j: y_{jl}=1}\ind{G^r_{jl}} \geq 0$ for any $l \in [L]$.

To bound $(A)$, we recall the terms $q_{\min}, \bar p, \sigma_2^2$ defined earlier. Note that $q_{\min} \geq 1$. We get
\[
	(A) \leq \frac1{q_{\min}NL}\sum_{l \in [L]}\sum_{i: y_{il} \neq 1}\sum_{j: y_{jl}=1} \ind{F^{2r}_{ij}} = \frac1{q_{\min}NL}\sum_{i,j\in[N]}\sum_{\substack{l: y_{jl} = 1\\y_{il} \neq 1}} \ind{F^{2r}_{ij}}
\]
Now $\abs{\bc{l: y_{jl} = 1, y_{il} \neq 1}} \leq \abs{\bc{i: y_{jl} = 1}} = p_j$ which gives us
\begin{align*}
	(A) &\leq \frac1{q_{\min}NL}\sum_{i,j\in[N]} p_j\cdot\ind{F^{2r}_{ij}}\\
	&= \frac{\bar p}{q_{\min}NL}\sum_{i,j\in[N]}\ind{F^{2r}_{ij}} + \frac1{q_{\min}NL}\sum_{i,j\in[N]}(p_j - \bar p)\cdot\ind{F^{2r}_{ij}}\\
	&= \frac{\bar pN}{q_{\min}L}\cdot\epsilon_2 + \frac1{q_{\min}NL}\sum_{i,j\in[N]}(p_j - \bar p)\cdot\ind{F^{2r}_{ij}}\\
	&\leq \frac{\bar pN}{q_{\min}L}\cdot\epsilon_2 + \frac1{q_{\min}NL}\sqrt{\sum_{j \in [N]}(p_j - \bar p)^2}\sqrt{\sum_{j \in [N]}\br{\sum_{i \in [N]}\ind{F^{2r}_{ij}}}^2}\\
	&\leq \frac{\bar pN}{q_{\min}L}\cdot\epsilon_2 + \frac{\sigma_2}{q_{\min}L\sqrt N}\sqrt{\sum_{j \in [N]}\br{\sum_{i \in [N]}\ind{F^{2r}_{ij}}}^2}
\end{align*}
where in the third step we use the fact that $\epsilon_2 = \frac1{N^2}\sum_{i,j\in[N]}\ind{F^{2r}_{ij}}$, in the fourth step we use the Cauchy-Schwartz inequality, in the fifth step we use the definition of $\sigma_2$. Now, if $i,j,k \in [N]$ are independently selected uniformly from $[N]$, we have
\[
N^3\cdot\Ee{i,j,k}{\ind{F^{2r}_{ij}}\ind{F^{2r}_{ik}}} \leq N^4\cdot\br{\Ee{i,j}{\ind{F^{2r}_{ij}}}}\br{\Ee{i,k}{\ind{F^{2r}_{ik}}}} \leq N^4\epsilon_2^2.
\]
This gives us
\[
\sum_{j \in [N]}\br{\sum_{i \in [N]}\ind{F^{2r}_{ij}}}^2 \leq \sum_{i,j,k \in [N]} F^{2r}_{ij}F^{2r}_{ik} \leq N^4\epsilon_2^2
\]
This gives us
\[
(A) \leq \frac{(\bar p + \sigma_2\sqrt N)N}{q_{\min}L}\cdot\epsilon_2
\]
This finishes the proof. We note that the two of the steps taken above, that of upper bounding $q_l \geq q_{\min}$ and upper bounding $\abs{\bc{l: y_{jl} = 1, y_{il} \neq 1}} \leq p_j$ may be suboptimal and the proof may be tightened at these points. However, as Corollary~\ref{cor:label} shows, tightening these steps is not expected to change the essential result and is expected to only yield better values for the constants $c_1, c_2$.
\end{proof}

\begin{proof}[Proof (of Corollary~\ref{cor:label})]
We continue to use the terms $(A), (B)$ from the proof of Theorem~\ref{thm:main-full}. When $q_l = q$ for all $l \in [L]$, then we have $\mu_1 = \frac{N-q}q, \sigma_1 = 0, \bar q = q$ that gives us
\[
(B) \leq \frac qN\cdot\frac{N-q}q\cdot\epsilon_1 \leq \epsilon_1
\]
Similarly, when $p_i = p$ for all $i \in [N]$ and $q_l = q$ for all $l \in [L] $, we have $\sigma_2 = 0, \bar p = p$, and $q_{\min} = q = \bar q$, which gives us
\[
(A) \leq \frac{pN}{qL}\cdot\epsilon_2 = \epsilon_2,
\]
since $qL = \bar q L = \bar p N = pN$. This finishes the proof.
\end{proof}

\subsection{Proof of Theorem~\ref{thm:conv-informal}: \alg converges to a First-order Stationary Point}
In this section, we establish that under certain simplifying assumptions, \alg provably converges to a first-order stationary point.

We first list all the assumptions followed by a restatement of Theorem~\ref{thm:conv-informal} with all constants in Theorem~\ref{thm:conv}. To present the essential aspects of the proof, we will give the proof for a full-batch version of the algorithm, mini-batch extensions being straightforward:
\begin{assumption}[Full-batch Update with Eager Clustering]
\label{ass:eager-full}
Let \alg be executed so that it performs descent on the entire training set at once in a single descent step (and not in multiple steps using mini-batch SGD). Also, let the clustering in Algorithm~\ref{algo:iteration} be updated after each descent step i.e. $\tau = 1$
\end{assumption}
Our results require the training objective to be smooth. We recall the definition of an $H$-smooth function.
\begin{definition}[$H$-smooth objective]
A differentiable real-valued function $f: \cX \rightarrow \bR$ over any Euclidean space $\cX \subseteq \bR^p$ is said to be $H$-smooth if for all $\vx,\vy \in \cX$ we have
\[
f(\vx) \leq f(\vy) + \ip{\nabla f(\vy)}{\vx - \vy} + \frac H2\norm{\vx - \vy}_2^2
\]
\end{definition}
\begin{assumption}[Smooth Objective with Bounded Gradients]
\label{ass:smooth}
Let \alg be executed with training functions $\ell_{il}$ that are $H$-smooth for some finite $H > 0$. Also assume that the training functions $\ell_{il}$ have bounded gradients i.e. for some $G > 0$, we have $\norm{\nabla\ell_{il}(\vtheta)}_2 \leq G$ for all $i \in [N], l \in [L], \vtheta$.
\end{assumption}
We note that Assumption~\ref{ass:eager-full} is made to simplify the proof and not essential to the result whereas Assumption~\ref{ass:smooth} is standard in literature. We now present the formal statement.

\begin{theorem}[Restatement of Theorem~\ref{thm:conv-informal} with constants]
\label{thm:conv}
Suppose Assumptions~\ref{ass:eager-full} and \ref{ass:smooth} are satisfied and let $\epsilon_{\text{tot}}^t \deff c_1\epsilon_1^t + c_2\epsilon_2^t$ denote the total error assured by Theorem~\ref{thm:main-full} in terms of hard-negative terms missed by \alg at iteration $t$. Then there exists a smoothed objective $\tilde\cL$ (defined below) such that for some $\phi \in (0,1)$ we are assured that if \alg uses a sufficiently small step size $\eta < \frac{1-\phi}{2H(1+\phi^2)}$ where $H$ is the smoothness parameter from Assumption~\ref{ass:smooth}, then for any $T > 0$, within $T$ iterations, one of the following is assured
\begin{enumerate}
    \item For some iterate $t \leq T$, the \alg iterate $\vtheta^t$ assures
    \[
    \norm{\nabla\tilde\cL(\vtheta^t)}_2 \leq \frac{2(\tilde\cL(\vtheta^0)-\tilde\cL(\vtheta^\ast))}{\eta(1-\phi)T}
    \]
    \item For some iterate $t \leq T$, the \alg iterate $\vtheta^t$ assures
    \[
    \norm{\nabla\tilde\cL(\vtheta^t)}_2 \leq \frac{2G\epsilon_{\text{tot}}^t}\phi,
    \]
\end{enumerate}
where $\vtheta^\ast$ is the optimal model parameter, $\vtheta^0$ is the initial iterate and $G$ is the gradient bound in Assumption~\ref{ass:smooth}.
\end{theorem}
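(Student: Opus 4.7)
The plan is to treat \alg's full-batch update $\vtheta^{t+1} = \vtheta^t - \eta\vg^t$, where $\vg^t$ aggregates the gradients $\nabla\ell_{il}(\vtheta^t)$ over the (positive, hard-negative) pairs retained by Algorithm~\ref{algo:iteration} at iterate $\vtheta^t$, as an inexact gradient descent step on a surrogate objective $\tilde\cL$. The surrogate should be defined so that its gradient at a parameter $\vtheta$ matches what the algorithm would use in expectation over the clustering randomness at that same parameter. Given Assumption~\ref{ass:smooth}, each $\ell_{il}$ is $H$-smooth, and a regularity argument on the induced weights will transfer $H$-smoothness to $\tilde\cL$, which is what the descent lemma below requires.

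The first substantive step is to quantify the bias $\ve^t := \vg^t - \nabla\tilde\cL(\vtheta^t)$. This bias is supported precisely on the $(i,l)$ pairs that constitute $r$-hard negatives but are missed by \alg at iterate $t$. Theorem~\ref{thm:main-full} bounds the fraction of such pairs by $\epsilon_{\text{tot}}^t$, and by Assumption~\ref{ass:smooth} each missed pair contributes a gradient of norm at most $G$. After absorbing the $NL$ normalisation into $G$, one obtains the bound $\norm{\ve^t}_2 \leq G\,\epsilon_{\text{tot}}^t$.

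The second step is the standard descent lemma for an $H$-smooth objective, combined with a dichotomy governed by $\phi$. Expanding $\vg^t = \nabla\tilde\cL(\vtheta^t) + \ve^t$ in
\[
\tilde\cL(\vtheta^{t+1}) \leq \tilde\cL(\vtheta^t) - \eta\ip{\nabla\tilde\cL(\vtheta^t)}{\vg^t} + \tfrac{\eta^2 H}{2}\norm{\vg^t}_2^2,
\]
I would branch on whether $\norm{\ve^t}_2 \geq \phi\norm{\nabla\tilde\cL(\vtheta^t)}_2$. If yes, then the bias bound directly yields $\norm{\nabla\tilde\cL(\vtheta^t)}_2 \leq G\epsilon_{\text{tot}}^t/\phi$, giving Case~2 of the theorem (up to the stated constant). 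If no, then $\ip{\nabla\tilde\cL(\vtheta^t)}{\vg^t} \geq (1-\phi)\norm{\nabla\tilde\cL(\vtheta^t)}_2^2$ and $\norm{\vg^t}_2^2 \leq 2(1+\phi^2)\norm{\nabla\tilde\cL(\vtheta^t)}_2^2$, and the hypothesis $\eta < (1-\phi)/(2H(1+\phi^2))$ turns the smoothness inequality into
\[
\tilde\cL(\vtheta^{t+1}) \leq \tilde\cL(\vtheta^t) - \tfrac{\eta(1-\phi)}{2}\norm{\nabla\tilde\cL(\vtheta^t)}_2^2.
\]
If Case~2 never triggers within $T$ iterations, telescoping this inequality together with $\tilde\cL(\vtheta^T) \geq \tilde\cL(\vtheta^\ast)$ yields the best-iterate bound of Case~1.

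The main obstacle, in my view, is not the descent argument (which is textbook) but the clean definition of $\tilde\cL$. Because \alg recomputes its clustering at each iterate from the current embeddings $\cE_{\vtheta^t}$, the effective sampling distribution over pairs is itself a function of $\vtheta^t$: defining $\tilde\cL$ as an expectation over a \emph{fixed} distribution would not match the algorithm's dynamics, whereas defining it pointwise at each iterate would give a moving target with no meaningful stationary point. The resolution is to define $\tilde\cL(\vtheta)$ using the cluster-induced inclusion probabilities computed \emph{at} $\vtheta$, so that the update becomes a self-consistent inexact descent step, and then verify that these $\vtheta$-dependent weights are regular enough for $\tilde\cL$ to inherit the $H$-smoothness of the $\ell_{il}$. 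Once this is in place, Theorem~\ref{thm:main-full} supplies the bias bound and the two-case analysis composes into the claimed dichotomy.
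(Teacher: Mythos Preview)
Your descent-lemma dichotomy (branch on whether $\norm{\ve^t}_2 \geq \phi\norm{\nabla\tilde\cL(\vtheta^t)}_2$, telescope otherwise) is exactly what the paper does, and your use of Theorem~\ref{thm:main-full} together with the gradient bound $G$ to control the absolute bias by $G\epsilon_{\text{tot}}^t$ is also the same. So the analytic core of your argument matches.

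The gap is in the construction of $\tilde\cL$, which you yourself flag as the hard part. You propose to define $\tilde\cL(\vtheta)$ so that its gradient equals the expected \alg update at $\vtheta$, i.e.\ using the cluster-induced inclusion weights computed from $\cE_\vtheta$. There are two problems with this. First, it is internally inconsistent with your bias computation: if $\nabla\tilde\cL(\vtheta^t)$ is \emph{defined} to be the expected \alg gradient, then $\ve^t$ is pure sampling noise, not ``hard negatives missed by \alg,'' and Theorem~\ref{thm:main-full} no longer has anything to say about it. Second, and more seriously, cluster assignments are discrete functions of the embeddings; the resulting inclusion weights are piecewise-constant in $\vtheta$ and will not be $H$-smooth (or even continuous) except on a measure-zero set of parameters. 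Your plan to ``verify that these $\vtheta$-dependent weights are regular enough'' therefore cannot go through as stated.

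The paper avoids both issues by decoupling $\tilde\cL$ from the clustering entirely. It introduces a deterministic smooth decay weight $d_\Lambda(\norm{\cE_\vtheta(\vx_i) - \cE_\vtheta(\vz_l)}_2, r) = \min\{\exp(-\Lambda(v-r)),1\}$ and defines $\tilde\cL$ as the $d_\Lambda$-weighted loss; this is smooth by composition (their Lemma on products and $\min\{\cdot,c\}$). It then works with \emph{three} objectives: the non-smooth hard-negative loss $\cL^r$ (indicator weights), the smooth surrogate $\tilde\cL$, and the \alg loss $\cL^{\text{\alg}}$. Theorem~\ref{thm:main-full} controls $\norm{\nabla\cL^{\text{\alg}} - \nabla\cL^r}_2 \leq G\epsilon_{\text{tot}}^t$, while $\norm{\nabla\cL^r - \nabla\tilde\cL}_2$ is made as small as desired by sending the temperature $\Lambda \to \infty$ (a purely analytic device, never used in training). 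The dichotomy is then run on $\norm{\nabla\cL^r(\vtheta^t)}_2$ rather than $\norm{\nabla\tilde\cL(\vtheta^t)}_2$, and a final triangle inequality against the $\Lambda$-controlled term produces the factor $2$ in Case~2 that you noted was missing from your version.
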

We note that $\epsilon_{\text{tot}}^t$ can be controlled directly by performing more relaxed clustering with larger clusters so that the clustering error $\epsilon_2^t \rightarrow 0$. In particular, note that if a trivial clustering is done where all points lie in the same single cluster, then indeed $\epsilon_2^t = 0$. The other component $\epsilon_1^t$ is controlled by the ability of the neural architecture to successfully embed data points and their positive labels in close vicinity which is also expected to continue improving as training progresses.

We will follow the following proof strategy:
\begin{enumerate}
	\item \textbf{Step 1}: Given a smooth training objective $\cL$, construct a smooth surrogate $\tilde\cL$ that places negligible weight on non-hard negatives.
	\item \textbf{Step 2}: Show that the training updates offered by \alg's hard-negative mining strategy implicitly perform gradient descent on $\tilde\cL$ but using biased gradient updates. Then show that under suitable assumptions, this bias is bounded.
	\item \textbf{Step 3}: Show that this results in \alg converging to an approximate first-order stationary point with respect to the modified objective $\tilde\cL$.
\end{enumerate}
The construction of such \emph{surrogate} objectives with respect to which convergence results are established, is popular in literature for instance in \cite{Kawaguchi20}. 
It is possible to relax the above to show convergence guarantees for the mini-batch SGD version of the algorithm, as well as when the clustering is updated after every few epochs (as is done in practice) instead of after every epoch, and would require additional bookkeeping to keep track of accumulated gradient bias due to \textit{stale} clustering and additional variance due to randomness mini-batch creation. We defer this more detailed analysis to a later investigation.

\subsubsection{Step 1: Constructing a Smooth Surrogate}
It will be useful for us to arrange terms in our loss function in the following canonical form
\[
\cL(\vtheta) = \frac1{NL}\sum_{i \in [N]}\sum_{l \in [L]: y_{il} = -1} \ell_{il}(\vtheta)
\]
i.e. $\ell_{il}(\vtheta)$ absorbs all dependence on the positive labels of data point $i$. For instance, if using the squared triplet loss function which is indeed smooth, we would have
\[
\ell_{il}(\vtheta) = \frac1{\bar p}\sum_{m \in [L]: y_{im} = 1}([\cE_{\vtheta}(\vz_l)^{\top}\cE_{\vtheta}(\vx_i) - \cE_{\vtheta}(\vz_m)^{\top}\cE_{\vtheta}(\vx_i) + \gamma]_+)^2,
\]
where $\bar p$ is some normalization constant such as the average number of relevant labels per data point. We note that the above expressions have been written from the point of view of module M1 that only trains the embedding model $\cE_{\vtheta}$. However, this is without loss of generality and similar steps can be applied to bring module M2 loss functions into such as canonical form as well. 
Results from Lemma~\ref{lem:smooth} assure us that if the individual loss terms $\ell_{il}$ are $H$-smooth and have $G$-bounded gradients, then so does any composite loss function formed using their sum, as is done while defining $\cL$. Smoothness can be readily ensured by using differentiable activation functions in the neural architecture e.g. GeLU instead of ReLU, as well as using a smooth loss function e.g. using squared triplet loss i.e. $([x - y + \gamma]_+)^2$ or its logistic variant $\ln(1 + \exp(x - y))$ instead of the non-differentiable triplet loss $[x - y + \gamma]_+$. 

Given the above, we define three surrogate functions. The first function $\cL^r(\vtheta)$ only considers terms corresponding to $r$-hard-negatives. However, $\cL^r(\vtheta)$ is not a smooth function and our proofs will require smooth objectives to be used. The second function $\tilde\cL(\vtheta)$ is a smoothed version and pays negligible attention to non-hard negative terms. The third function $\cL^{\text{\alg}}(\vtheta)$ simply considers only those terms that were correctly retrieved by \alg's negative mining strategy.
\begin{align*}
	\cL^r(\vtheta) &:= \frac1{NL}\sum_{i \in [N]}\sum_{l \in [L]: y_{il} = -1} \ell_{il}(\vtheta)\cdot\ind{\norm{\cE_{\vtheta}(\vx_i) - \cE_{\vtheta}(\vz_l)}_2 \leq r}\\
	\tilde\cL(\vtheta) &:= \frac1{NL}\sum_{i \in [N]}\sum_{l \in [L]: y_{il} = -1} \ell_{il}(\vtheta)\cdot d_\Lambda(\norm{\cE_{\vtheta}(\vx_i) - \cE_{\vtheta}(\vz_l)}_2, r)\\
	\cL^{\text{\alg}}(\vtheta) &:= \frac1{NL}\sum_{i \in [N]}\sum_{l \in [L]: y_{il} = -1} \ell_{il}(\vtheta)\cdot\ind{\norm{\cE_{\vtheta}(\vx_i) - \cE_{\vtheta}(\vz_l)}_2 \leq r}\cdot\ind{\neg E^r_{il}},
\end{align*}
where we define the \emph{decay} function as
\[
d_\Lambda(v, r) = \min\bc{\exp(-\Lambda(v - r)),1}
\]
Clearly $d_\Lambda(v, r) = 1$ if $v < r$ and $d_\Lambda(v, r) \rightarrow 0$ rapidly if $v > r$ with the rate of decay being controlled by the \textit{temperature} parameter $\Lambda$. Using the various parts of Lemma~\ref{lem:smooth}, it can be shown that $d_\Lambda(v, r)$ is a smooth function for any value of $r, \Lambda$ and so is $\tilde\cL(\vtheta)$. It is easy to see that given \alg's negative mining strategy that only consider $r$-hard negatives, it performs gradient updates w.r.t $\cL^r(\vtheta)$. However, the following discussion shows that such updates offer bounded bias with respect to the actual gradients for $\tilde\cL(\vtheta)$ as well.

\subsubsection{Step 2: Bounding the Gradient Bias}
The gradient updates actually used by \alg have two sources of bias if the objective $\tilde\cL(\vtheta)$ is considered:
\begin{enumerate}
	\item Terms corresponding to not-so-hard negatives that are never considered by \alg's negative mining strategy i.e. where $\norm{\cE_{\vtheta}(\vx_i) - \cE_{\vtheta}(\vz_l)}_2 > r$ but $d_\Lambda(v, r)$ is not small enough to be negligible.
	\item Terms corresponding to hard negatives missed by \alg's negative mining strategy.
\end{enumerate}
The first source of bias can be handled simply by taking $\Lambda$ to be a large enough quantity. However, we stress that an execution of \alg in practice has to never worry about setting $\Lambda$ appropriately since $\tilde\cL$ is a surrogate created purely for the purpose of establishing the convergence guarantee and is never actually used to perform training.

For the second source of bias, we will appeal to Theorem~\ref{thm:main} which assures us that the number of hard-negatives missed by \alg are not too many. Let $\epsilon_{\text{tot}}^t = c_1\epsilon_1^t + c_2\epsilon_2^t$ denote the total error assured by Theorem~\ref{thm:main-full} in terms of hard-negative terms missed by \alg at iteration $t$. Our goal is to show that the corresponding error in estimating the gradients would be small as well. Now, the absolute error in the gradients can be estimated readily. Since $\ell_{il}(\vtheta)$ has $G$-bounded gradients, a simple application of the triangle inequality allows us to show that
\[
\norm{\nabla\cL^{\text{\alg}}(\vtheta^t) - \nabla\cL^r(\vtheta^t)}_2 \leq G\epsilon_{\text{tot}}^t
\]
However, our subsequent analysis requires the \emph{relative} error in gradients to be small rather than the absolute error. To overcome this problem, fix some $\phi < 1$ and notice that if the absolute error does not translate to a $\phi$-relative error i.e.
\[
G\epsilon^t_{\text{tot}} > \phi\cdot\norm{\nabla\cL^r(\vtheta^t)}_2,
\]
then this implies that
\[
\norm{\nabla\cL^r(\vtheta^t)}_2 \leq \frac{G\epsilon_{\text{tot}}^t}\phi,
\]
which seems to suggest that $\vtheta^t$ is already an approximate first-order stationary point since $\epsilon_{\text{tot}}^t$ is expected to be vanishingly small, but with respect to the effective training objective $\nabla\cL^r(\vtheta)$. Below we show that we can convert this into an approximate stationarity guarantee w.r.t. $\tilde\cL(\vtheta)$, as well as handle cases where we are able to obtained a relative error bound.

\subsubsection{Step 3: Ensuring Convergence to an Approximate First-order Stationary Point}
We are now ready to establish the convergence proof. Lemma~\ref{lem:conv} offers a generic convergence result on a smooth objective when gradient updates are \emph{biased} and may have bounded errors. If $\tilde\cL(\vtheta)$ is taken as the effective training objective (recall that we have already established above that it is smooth), then the bias in gradients offered by \alg can be bounded as
\[
\norm{\nabla\cL^{\text{\alg}}(\vtheta^t) - \nabla\tilde\cL(\vtheta^t)}_2 \leq \norm{\nabla\cL^{\text{\alg}}(\vtheta^t) - \nabla\cL^r(\vtheta^t)}_2 + \norm{\nabla\cL^r(\vtheta^t) - \nabla\tilde\cL(\vtheta^t)}_2
\]
We will set $\Lambda$ to be large enough (recall that this is purely for sake of analysis and does not need to be done in practice) so that the decay function $d_\Lambda(v,r)$ dies so rapidly that we get $\norm{\nabla\cL^r(\vtheta) - \nabla\tilde\cL(\vtheta)}_2 \leq \min\bc{\frac{(1-\phi)}{2(1+\phi)}\norm{\nabla\tilde\cL(\vtheta)}_2, \frac{G\epsilon_{\text{tot}}}\phi}$ for all $\vtheta$.
The first case we consider now is when the absolute gradient error fails to translate to a relative gradient error. As analyzed above, this means that we have
\[
\norm{\nabla\cL^r(\vtheta^t)}_2 \leq \frac{G\epsilon_{\text{tot}}^t}\phi
\]
Applying triangle inequality tells us that
\[
\norm{\nabla\tilde\cL(\vtheta^t)}_2 \leq \frac{G\epsilon_{\text{tot}}^t}\phi + \norm{\nabla\cL^r(\vtheta^t) - \nabla\tilde\cL(\vtheta^t)}_2 \leq \frac{2G\epsilon_{\text{tot}}^t}\phi,
\]
due to the way we set $\Lambda$. In the other case, we do have a relative bound which allows us to bound the first term as
\[
\norm{\nabla\cL^{\text{\alg}}(\vtheta^t) - \nabla\cL^r(\vtheta^t)}_2 \leq \phi\cdot\norm{\nabla\cL^r(\vtheta^t)}_2 \leq \phi\cdot\norm{\nabla\tilde\cL(\vtheta^t)}_2  + \phi\cdot\norm{\nabla\cL^r(\vtheta^t) - \nabla\tilde\cL(\vtheta^t)}_2.
\]
Yet again, due to the way we set $\Lambda$, this gives us
\[
\norm{\nabla\cL^{\text{\alg}}(\vtheta^t) - \nabla\cL^r(\vtheta^t)}_2 \leq \frac{(1-\phi)}2\cdot\norm{\nabla\tilde\cL(\vtheta^t)}_2,
\]
which fulfills the preconditions of Lemma~\ref{lem:conv}. Thus, in every iteration, either we arrive at an approximate first-order stationary point satisfying 
\[
\norm{\nabla\tilde\cL(\vtheta^t)}_2 \leq \frac{2G\epsilon_{\text{tot}}^t}\phi,
\]
or else we continue satisfying the bias conditions of Lemma~\ref{lem:conv}. For small enough step lengths (see Lemma~\ref{lem:conv} for an exact statement), we are ensured that \alg reaches a point $\vtheta^t$ where $\norm{\nabla\tilde\cL(\vtheta^t)}_2 \leq \bigO{\frac1T}$ within $T$ iterations if the bias preconditions keep getting fulfilled in each iteration. This concludes the convergence proof.

\subsection{Supporting Results}
\begin{lemma}[First-order Stationarity with a Smooth Objective]
\label{lem:conv}
Let $f: \vTheta \rightarrow \bR$ be a $H$-smooth objective over model parameters $\theta \in \vTheta$ that is being optimized using a biased gradient oracle and the following update for some step length $\eta > 0$:
\[
\vtheta^{t+1} = \vtheta^t - \eta\cdot\vg^t
\]
Then, if the bias of the oracle is bounded, specifically for some $\delta \in (0,1)$, for all $t$, we have $\vg^t = \nabla f(\vtheta^t) + \vDelta^t$ where $\norm{\vDelta^t}_2 \leq \delta\cdot\norm{\nabla f(\vtheta^t)}_2$ and if the step length satisfies $\eta < \frac{(1-\delta)}{2H(1+\delta^2)}$, then for any $T > 0$, for some $t \leq T$ we must have
\[
\norm{\nabla f(\vtheta^t)}_2^2 \leq \frac{2(f(\vtheta^0) - f(\vtheta^\ast))}{\eta(1-\delta)T}.
\]
\end{lemma}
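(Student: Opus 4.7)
The plan is to establish a one-step descent inequality that holds at every iteration and then telescope it over $T$ iterations to extract an iterate with small gradient norm. The starting point is $H$-smoothness of $f$, which, applied to the update $\vtheta^{t+1} = \vtheta^t - \eta\vg^t$, yields
\[
f(\vtheta^{t+1}) \le f(\vtheta^t) - \eta\langle \nabla f(\vtheta^t), \vg^t\rangle + \frac{H\eta^2}{2}\norm{\vg^t}_2^2.
\]
The task then reduces to controlling both terms on the right in terms of $\norm{\nabla f(\vtheta^t)}_2^2$, using only the bias decomposition $\vg^t = \nabla f(\vtheta^t) + \vDelta^t$ together with the relative error bound $\norm{\vDelta^t}_2 \le \delta\norm{\nabla f(\vtheta^t)}_2$.

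For the inner product, I would expand $\langle \nabla f(\vtheta^t), \vg^t\rangle = \norm{\nabla f(\vtheta^t)}_2^2 + \langle \nabla f(\vtheta^t), \vDelta^t\rangle$ and apply Cauchy--Schwarz with the bias hypothesis to obtain the lower bound $\langle \nabla f(\vtheta^t), \vg^t\rangle \ge (1-\delta)\norm{\nabla f(\vtheta^t)}_2^2$. For the squared norm of $\vg^t$, I would use the elementary inequality $\norm{a+b}_2^2 \le 2\norm{a}_2^2 + 2\norm{b}_2^2$ to obtain $\norm{\vg^t}_2^2 \le 2(1+\delta^2)\norm{\nabla f(\vtheta^t)}_2^2$; this is precisely where the factor $(1+\delta^2)$ appearing in the step-size hypothesis originates. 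Substituting both bounds into the smoothness inequality gives
\[
f(\vtheta^{t+1}) \le f(\vtheta^t) - \eta\bigl[(1-\delta) - H\eta(1+\delta^2)\bigr]\,\norm{\nabla f(\vtheta^t)}_2^2.
\]

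The step-size condition $\eta < \frac{1-\delta}{2H(1+\delta^2)}$ is engineered exactly so that the bracketed coefficient strictly exceeds $\frac{1-\delta}{2}$, yielding the clean descent inequality $f(\vtheta^{t+1}) \le f(\vtheta^t) - \frac{\eta(1-\delta)}{2}\norm{\nabla f(\vtheta^t)}_2^2$. Telescoping over $t = 0, 1, \ldots, T-1$ and using $f(\vtheta^T) \ge f(\vtheta^\ast)$ gives $\sum_{t=0}^{T-1}\norm{\nabla f(\vtheta^t)}_2^2 \le \frac{2(f(\vtheta^0)-f(\vtheta^\ast))}{\eta(1-\delta)}$, after which taking the minimum over $t$ in the sum produces an index $t \le T-1 \le T$ for which $\norm{\nabla f(\vtheta^t)}_2^2$ is at most $1/T$ times the right-hand side, exactly the claimed bound.

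I do not anticipate any serious obstacle, since this is a standard biased-gradient-descent analysis. The one subtle choice worth flagging is the use of $\norm{a+b}_2^2 \le 2\norm{a}_2^2 + 2\norm{b}_2^2$ rather than the naively tighter $(\norm{a}_2+\norm{b}_2)^2$: the latter would yield a factor $(1+\delta)^2$ in the descent inequality and require a different step-size hypothesis, whereas matching the $(1+\delta^2)$ factor that actually appears in the lemma's statement forces the slightly loose, additive splitting. Ensuring the constants line up consistently through this one step is the only delicate part of the argument.
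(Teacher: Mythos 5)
Your proposal is correct and follows essentially the same route as the paper's proof: the identical smoothness expansion, the same Cauchy--Schwarz bound $\ip{\nabla f(\vtheta^t)}{\vDelta^t} \geq -\delta\norm{\nabla f(\vtheta^t)}_2^2$, the same splitting $\norm{\vg^t}_2^2 \leq 2(1+\delta^2)\norm{\nabla f(\vtheta^t)}_2^2$, and the same telescoping-plus-averaging conclusion. Your observation about why the additive splitting (rather than $(1+\delta)^2$) is forced by the stated step-size condition is accurate and matches the constants in the paper.
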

The above assures that executing the descent step for $\Om T$ steps must result in an iterate at which the square of the gradient norm dips to $\bigO{\frac1T}$ which assures convergence to a first-order stationary point in the limit.
\begin{proof}[Proof (of Lemma~\ref{lem:conv})]
Smoothness of the objective gives us
\[
f(\vtheta^{t+1}) \leq f(\vtheta^t) + \ip{\nabla f(\vtheta^t)}{\vtheta^{t+1} - \vtheta^t} + \frac H2\norm{\vtheta^{t+1} - \vtheta^t}_2^2
\]
Since we used the update $\vtheta^{t+1} = \vtheta^t - \eta\cdot\vg^t$ and we have $\vg^t = \nabla f(\vtheta^t) + \vDelta^t$, the above gives us
\begin{align*}
	f(\vtheta^{t+1}) &\leq f(\vtheta^t) - \eta\cdot\ip{\nabla f(\vtheta^t)}{\vg^t} + \frac{H\eta^2}2\norm{\vg^t}_2^2\\
	&= f(\vtheta^t) - \eta\cdot\ip{\nabla f(\vtheta^t)}{\nabla f(\vtheta^t) + \vDelta^t} + \frac{H\eta^2}2\br{\norm{\nabla f(\vtheta^t) + \vDelta^t}_2^2}\\
	&= f(\vtheta^t) - \eta\cdot\norm{\nabla f(\vtheta^t)}_2^2 - \eta\cdot\ip{\nabla f(\vtheta^t)}{\vDelta^t} + \frac{H\eta^2}2\br{\norm{\nabla f(\vtheta^t) + \vDelta^t}_2^2}
\end{align*}
Now, the Cauchy-Schwartz inequality along with the bound on the bias gives us $- \eta\cdot\ip{\nabla f(\vtheta^t)}{\vDelta^t} \leq \eta\delta\cdot\norm{\nabla f(\vtheta^t)}_2^2$ as well as $\norm{\nabla f(\vtheta^t) + \vDelta^t}_2^2 \leq 2(1+\delta^2)\cdot\norm{\nabla f(\vtheta^t)}_2^2$. Using these gives us
\begin{align*}
	f(\vtheta^{t+1}) &\leq f(\vtheta^t) - \eta(1 - \delta - \eta H(1+\delta^2))\cdot\norm{\nabla f(\vtheta^t)}_2^2\\
									 &\leq f(\vtheta^t) - \frac{\eta(1 - \delta)}2\cdot\norm{\nabla f(\vtheta^t)}_2^2,
\end{align*}
since we chose $\eta < \frac{(1-\delta)}{2H(1+\delta^2)}$. Reorganizing, taking a telescopic sum over all $t$, using $f(\vtheta^{T+1}) \geq f(\vtheta^\ast)$ and making an averaging argument tells us that since we set , for any $T > 0$, it must be the case that for some $t \leq T$, we have
\[
\norm{\nabla f(\vtheta^t)}_2^2 \leq \frac{2(f(\vtheta^0) - f(\vtheta^\ast))}{\eta(1-\delta)T}
\]
\end{proof}
\begin{lemma}
\label{lem:smooth}
Given two bounded non-negative-valued functions $f, g: \cX \rightarrow \bR_+$ that are w.l.o.g. $1$-smooth, $1$-Lipschitz i.e. $\abs{f(\vx) - f(\vy)} \leq \norm{\vx - \vy}_2$ for all $\vx, \vy \in \cX$ (similarly for $g$), $1$-bounded i.e. $f(\vx), g(\vx) \leq 1$ for all $\vx \in \cX$ as well as have $1$-bounded gradient norms i.e. $\norm{\nabla f(\vx)}_2, \norm{\nabla g(\vx)}_2 \leq 1$ for all $\vx \in \cX$. Then for any $c > 0$ the functions $f + g, f \cdot g$ and $\min\bc{f,c}$ are smooth as well. Moreover, the function $\min\bc{f,c}$ is $1$-Lipschitz, $c$-bounded as well as upto isolated points of non-differentiability, has $1$-bounded gradient norms as well. The smoothness, Lipschitz-ness and boundedness constants are taken to be unity here to avoid clutter and the results hold for any positive constant values for these as well.
\end{lemma}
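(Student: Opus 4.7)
The plan is to dispatch the three claims separately, with the first two following standard composition calculus and the last requiring extra care at the truncation threshold.

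For $f+g$, I would simply add the smoothness inequalities of $f$ and $g$ at a common base point $\vy$. Using linearity of the gradient, $(f+g)(\vx) \leq (f+g)(\vy) + \langle \nabla f(\vy) + \nabla g(\vy), \vx-\vy\rangle + \|\vx-\vy\|_2^2$, giving $2$-smoothness; there is no obstacle here.

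For $f\cdot g$, I would use the product rule $\nabla(fg) = f\nabla g + g\nabla f$ and show this gradient field is Lipschitz, which for continuously differentiable functions is equivalent to smoothness via the standard descent lemma (integrate the gradient along the segment from $\vy$ to $\vx$). The key step is the usual add-and-subtract telescoping
\begin{align*}
\|\nabla(fg)(\vx) - \nabla(fg)(\vy)\|_2 &\leq |f(\vx)|\cdot\|\nabla g(\vx) - \nabla g(\vy)\|_2 + |f(\vx) - f(\vy)|\cdot\|\nabla g(\vy)\|_2 \\
&\quad + |g(\vx)|\cdot\|\nabla f(\vx) - \nabla f(\vy)\|_2 + |g(\vx) - g(\vy)|\cdot\|\nabla f(\vy)\|_2,
\end{align*}
after which all four terms are bounded by $\|\vx-\vy\|_2$ using the $1$-boundedness of $f,g$, the $1$-boundedness of $\nabla f, \nabla g$, the Lipschitz bound on $f,g$, and the smoothness of $f,g$. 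This yields a Lipschitz constant of $4$ and hence $4$-smoothness of $fg$.

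For $\min\{f,c\}$, the auxiliary properties are routine: Lipschitz-ness follows because $|\min\{a,c\}-\min\{b,c\}| \leq |a-b|$ composed with the Lipschitz bound on $f$; $c$-boundedness is immediate; and the gradient bound holds because wherever $\min\{f,c\}$ is differentiable its gradient equals either $\nabla f$ (on $\{f<c\}$) or the zero vector (on $\{f>c\}$). For smoothness I would do a case split at the anchor point $\vy$: if $f(\vy)<c$ then $\nabla\min\{f,c\}(\vy)=\nabla f(\vy)$ and the inequality follows from smoothness of $f$ together with the pointwise bound $\min\{f,c\}(\vx)\leq f(\vx)$; if $f(\vy)>c$ then the gradient vanishes, $\min\{f,c\}(\vy)=c$, and $\min\{f,c\}(\vx)\leq c$ makes the smoothness inequality trivial. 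The main obstacle is the threshold set $\{f=c\}$, where $\min\{f,c\}$ need not be differentiable. I would argue that $\nabla f(\vy)$ serves as a valid generalized gradient at such $\vy$ by taking limits from the side $\{f<c\}$ (both $\min\{f,c\}$ and its gradient are continuous from that side, and the smoothness inequality being a closed condition passes to the limit), so the bound extends. Since the excerpt already qualifies the gradient-norm claim with ``up to isolated points of non-differentiability,'' this subgradient interpretation is consistent with the statement's intent, and the smoothness inequality then holds in the weak sense required for the downstream use of $\tilde{\cL}$.
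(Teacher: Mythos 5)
Your treatment of $f+g$ and of $\min\bc{f,c}$ is essentially the paper's: the sum is handled by adding the two smoothness inequalities, and the truncation by a case split on which side of the threshold the anchor point lies, with the only cosmetic difference that at the set $\bc{f=c}$ you take $\nabla f(\vy)$ as the generalized gradient while the paper declares it to be $\vzero$ (both choices satisfy the required inequality, since $h(\vx)\leq f(\vx)$ and $h(\vx)\leq c$ respectively). Where you genuinely diverge is the product: you prove that $\nabla(fg)=f\nabla g+g\nabla f$ is $4$-Lipschitz via the add-and-subtract telescoping and then invoke the descent lemma, whereas the paper never touches the gradient field at all --- it multiplies the smoothness inequality for $f$ by the nonnegative number $g(\vx)$ and the one for $g$ by $f(\vy)$, adds, and controls the cross term $\ip{(g(\vx)-g(\vy))\nabla f(\vy)}{\vx-\vy}$ by Cauchy--Schwartz and Lipschitz-ness. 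Your route is cleaner and more standard; the paper's route is the one that actually uses the non-negativity hypothesis on $f$ and $g$ (to preserve the direction of the inequalities under multiplication), which your argument does not need.

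One caveat worth fixing: the paper defines $H$-smoothness only as the one-sided quadratic upper bound $f(\vx)\leq f(\vy)+\ip{\nabla f(\vy)}{\vx-\vy}+\frac H2\norm{\vx-\vy}_2^2$, and your product argument silently upgrades this to ``$\nabla f$ is $1$-Lipschitz.'' That implication is the \emph{converse} of the descent lemma and is false in general under the one-sided definition: any concave differentiable function is $0$-smooth in this sense, yet need not have a Lipschitz gradient (e.g.\ a suitably shifted $-\abs{x}^{3/2}$ is non-negative, bounded, Lipschitz, and concave on a compact interval, but its derivative is not Lipschitz at the origin). So either strengthen the hypothesis you are working from to ``Lipschitz gradient'' (which is what is true of the actual losses used downstream) or argue the product case directly from the inequalities as the paper does; as written, that one step does not follow from the stated assumptions.
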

\begin{proof}
We are given that for all $\vx, \vy \in \cX$, we have
\begin{align}
	f(\vx) \leq f(\vy) + \ip{\nabla f(\vy)}{\vx - \vy} + \frac12\norm{\vx - \vy}_2^2 \label{eq:f}\\
	g(\vx) \leq g(\vy) + \ip{\nabla g(\vy)}{\vx - \vy} + \frac12\norm{\vx - \vy}_2^2 \label{eq:g}
\end{align}
We prove the results in parts
\begin{enumerate}
	\item Adding \eqref{eq:f} and \eqref{eq:g} gives us
	\[
	(f+g)(\vx) \leq (f+g)(\vy) + \ip{\nabla(f+g)(\vy)}{\vx - \vy} + \norm{\vx - \vy}_2^2,
	\]
	which establishes that $f+g$ is $2$-smooth.
	\item Multiplying \eqref{eq:f} with $g(\vx)$ gives us
	\[
	f(\vx)g(\vx) \leq f(\vy)g(\vx) + \ip{g(\vy) \nabla f(\vy)}{\vx - \vy} + \ip{(g(\vx) - g(\vy))\nabla f(\vy)}{\vx - \vy} + \frac{g(\vx)}2\norm{\vx - \vy}_2^2
	\]
	Similarly, multiplying \eqref{eq:g} with $f(\vy)$ gives us
	\[
	f(\vy)g(\vx) \leq f(\vy)g(\vy) + \ip{f(\vy)\nabla g(\vy)}{\vx - \vy} + \frac{f(\vy)}2\norm{\vx - \vy}_2^2
	\]
	Adding the two gives us
	\[
	(f\cdot g)(\vx) \leq (f\cdot g)(\vy) + \ip{\nabla(f\cdot g)(\vy)}{\vx - \vy} + \ip{(g(\vx) - g(\vy))\nabla f(\vy)}{\vx - \vy} + \frac{f(\vy) + g(\vx)}2\norm{\vx - \vy}_2^2
	\]
	Lipschitz-ness and the Cauchy-Schwartz inequality tells us that
	\[
	\ip{(g(\vx) - g(\vy))\nabla f(\vy)}{\vx - \vy} \leq \norm{\nabla f(\vy)}_2\norm{\vx - \vy}_2^2 \leq \norm{\vx - \vy}_2^2
	\]
	due to bounded gradients. This and the fact that $f, g$ take bounded values gives us
	\[
	(f\cdot g)(\vx) \leq (f\cdot g)(\vy) + \ip{\nabla(f\cdot g)(\vy)}{\vx - \vy} + 2\norm{\vx - \vy}_2^2
	\]
	which tells us that $f\cdot g$ is $4$-smooth.
	\item Let us define the shortcut $h(\vx) := \min\bc{f(\vx), c}$. Notice that $h$ may have isolated points of non-differentiability where $f(\vx) = c$. We will arbitrarily define $\nabla h(\vx) = \vzero$ at such points. Consider the following 4 cases
	\begin{enumerate}
		\item \textbf{Case 1}: $f(\vx) > c, f(\vy) > c$: in this case $h(\vx) = h(\vy)$ and $\nabla h(\vy) = \vzero$ and thus it is indeed true that $h(\vx) \leq h(\vy) + \ip{\vzero}{\vx - \vy} + \frac12\norm{\vx - \vy}_2^2$
		\item \textbf{Case 2}: $f(\vx) > c, f(\vy) < c$: in this case smoothness of $f$ gives us $f(\vx) \leq f(\vy) + \ip{\nabla f(\vy)}{\vx - \vy} + \frac12\norm{\vx - \vy}_2^2$ but since $f(\vx) > c = h(\vx)$ we get $h(\vx) < f(\vy) + \ip{\nabla f(\vy)}{\vx - \vy} + \frac12\norm{\vx - \vy}_2^2$. However since $f(\vy) = h(\vy), \nabla f(\vy) = \nabla h(\vy)$, we get $h(\vx) \leq h(\vy) + \ip{\nabla h(\vy)}{\vx - \vy} + \frac12\norm{\vx - \vy}_2^2$.
		\item \textbf{Case 3}: $f(\vx) < c, f(\vy) > c$: in this case $\nabla f(\vy) = \vzero$ but we still have $h(\vx) = f(\vx) < c = h(\vy)$ as well as $h(\vy) \leq h(\vy) + \ip{\vzero}{\vx - \vy} + \frac12\norm{\vx - \vy}_2^2$ which tells us that $h(\vx) \leq h(\vy) + \ip{\vzero}{\vx - \vy} + \frac12\norm{\vx - \vy}_2^2$
		\item \textbf{Case 4}: $f(\vx) < c, f(\vy) < c$: in this case $h(\vx) = f(\vx), \nabla h(\vy) = \nabla f(\vy), h(\vy) = f(\vy)$ and $h$ simply inherits the smoothness of $f$ to give $h(\vx) \leq h(\vy) + \ip{\nabla h(\vy)}{\vx - \vy} + \frac12\norm{\vx - \vy}_2^2$
	\end{enumerate}
	The above cases establish that $h = \min\bc{f, c}$ is $1$ smooth as well. It is easy to see that the function $h$ is $1$-Lipschitz as well as $c$-bounded. Since at all points of non-differentiabilty, we defined the gradient as the zero vector, the function has $1$-bounded gradient norms as well.
\end{enumerate}
\end{proof}

\end{document}